\newtheorem{theorem}{Theorem}[section]
\newtheorem{corollary}[theorem]{Corollary}
\newtheorem{lemma}[theorem]{Lemma}
\newtheorem{proposition}[theorem]{Proposition}
\theoremstyle{definition}
\newtheorem{definition}[theorem]{Definition}
\newtheorem{remark}[theorem]{Remark}
\numberwithin{equation}{section}
\newcommand{\R}{\mathbb{R}}
\newcommand{\C}{\mathbb{C}}
 \newcommand{\N}{\mathbb{N}}
 \newcommand{\Z}{\mathbb{Z}}
 \newcommand{\scalprod}[1]{\left\langle #1  \right \rangle}
 \DeclareMathOperator*{\argmin}{arg min}
\title{
Stable and robust sampling strategies for compressive imaging}
\author{Felix Krahmer and Rachel Ward}
\begin{document}

\maketitle

\begin{abstract}
In many signal processing applications, one wishes to acquire images that are sparse in transform domains such as spatial
finite differences or wavelets using frequency domain samples. For such
applications, overwhelming empirical evidence suggests that superior
image reconstruction can be obtained through \emph{variable density}
sampling strategies that concentrate on lower frequencies.  The wavelet and Fourier transform domains are not incoherent because low-order wavelets and low-order frequencies are correlated, so compressive sensing theory does not immediately imply sampling strategies and reconstruction guarantees.   In this paper
we turn to a more refined notion of coherence -- the so-called \emph{local coherence} -- measuring for each sensing vector separately how correlated it is to the sparsity basis.
For Fourier measurements and Haar wavelet sparsity, the local coherence can be controlled and bounded explicitly,  so for matrices comprised of frequencies sampled from a suitable inverse square power-law
density, we can prove the restricted isometry property with near-optimal embedding dimensions.  Consequently, the variable-density sampling strategy we provide allows for image reconstructions that are stable to sparsity defects
and robust to measurement noise. Our results cover both reconstruction by $\ell_1$-minimization and by total variation minimization.  The local coherence framework developed in this paper should be of independent interest in sparse recovery problems more generally, as it implies that for optimal sparse recovery results, it suffices to have bounded \emph{average} coherence from sensing basis to sparsity basis -- as opposed to bounded \emph{maximal} coherence -- as long as the sampling strategy is adapted accordingly. 
\end{abstract}

\section{Introduction}
The measurement process in a wide range of imaging applications such as radar, sonar, astronomy, and computer tomography, can be modeled -- after appropriate approximation and discretization -- as taking samples from weighted discrete Fourier transforms \cite{fsy10}. Similarly, it is well known in the medical imaging literature that the measurements taken in Magnetic Resonance Imaging (MRI) are well modeled as Fourier coefficients of the desired image. Within all of these scenarios, one seeks strategies for taking frequency domain measurements so as to reduce the number of measurements without degrading the quality of image reconstruction. A central feature of natural images that can be exploited in this process is that they allow for approximately sparse representation in suitable bases or dictionaries.

 The theory of compressive sensing, as introduced in \cite{do06b, crt}, fits comfortably into this set-up:  its key observation is that signals which allow for a sparse or approximately sparse representation can be recovered from relatively few linear measurements via convex approximation, provided these measurements are sufficiently \emph{incoherent} with the basis in which the signal is sparse.
     
\subsection{Imaging with partial frequency measurements}
Much work in compressive sensing has focused on the setting of imaging with frequency domain measurements \cite{crt, CRT06:Stable, RV08:sparse} and, in particular, towards accelerating the MRI measurement process \cite{ldsp08, ldp07}.  For images that have sparse representation in the canonical basis, the incoherence between Fourier and canonical bases implies that uniformly subsampled discrete Fourier transform measurements can be used to achieve near-optimal oracle reconstruction bounds:  up to logarithmic factors in the discretization size,  any image can be approximated from $s$ such frequency measurements up to the error that would be incurred if the image were first acquired in full, and then compressed by setting all but the $s$ largest-magnitude pixels to zero \cite{RV08:sparse, ra09-1, CRT06:Stable, cata06}.   Compressive sensing recovery guarantees hold more generally subject to incoherence between sampling and sparsity transform domains.  Unfortunately, natural images are generally \emph{not} 
directly sparse in the 
standard basis, but rather with respect to transform domains more closely resembling wavelet bases.  As low-scale wavelets are highly correlated (coherent) with low frequencies, sampling theorems for compressive imaging with partial Fourier transform measurements have remained elusive.

A number of empirical studies, including the very first papers on compressive sensing MRI \cite{ldsp08, ldp07}, suggest that better image restoration is possible by subsampling frequency measurements from variable densities preferring low frequencies to high frequencies. In fact, variable-density MRI sampling had been proposed previously in a number of works outside the context of compressive sensing, although there did not seem to be a consensus on an optimal density \cite{PREMRI, sh, tn, many, gk}.  For a thorough empirical comparison of different densities from the compressive sensing perspective, see \cite{WA10}.

Guided by these observations, the authors of \cite{pvw11} estimated the coherence between each element of the sensing basis with elements of the sparsity basis separately as a means to derive optimal sampling strategies in the context of compressive sensing MRI.  In particular, they observe that incoherence-based results in compressive sensing imply exact recovery results for more general systems if one samples a row from the measurement basis proportionally to its squared maximal correlation with the sparsity-inducing basis.  For given problem dimensions, they find the optimal distribution as the solution to a convex problem.   In \cite{pmgtvvw12}, the approach of optimizing the sampling distribution is combined with premodulation by a chirp, which by itself is another measure to reduce the coherence \cite{ailib, kw10}.  A similar variable-density analysis already appeared in \cite{rw11} in the context of sampling strategies and reconstruction guarantees for functions with sparse orthogonal polynomial 
expansions and will also be the guiding strategy of this paper (cf. Section \ref{riplocal}). After the submission of this paper, the idea of variable-density sampling has been extended to the context of block sampling \cite{bbw13}, motivated by practical limitations of MRI hardware.

\subsection{Contributions of this paper}
In this paper we derive near-optimal reconstruction bounds for a particular type of variable-density subsampled discrete Fourier transform for both wavelet sparsity and gradient sparsity models.  More precisely, up to logarithmic factors in the discretization size,  any image can be approximated from $s$ such measurements up to the error that would be incurred if the wavelet transform or the gradient of the image, respectively, were first computed in full, and then compressed by setting all but the $s$ largest-magnitude coefficients to zero.  Note that the reconstruction results that have been derived for uniformly-subsampled frequency measurements \cite{crt, f12} only provide such guarantees for images which are exactly sparse.  

A major role in determining an appropriate sampling density will be played by the {\em local coherence} of the sensing basis with respect to the sparsity basis, as introduced in Section~\ref{riplocal}. 
Consequently, an important ingredient of our analysis is Theorem \ref{Bincoherent}, which provides frequency-dependent bounds on inner products between rows of the orthonormal discrete Fourier transform and rows of the orthonormal discrete Haar wavelet transform.  In particular, the maximal correlation between a \emph{fixed} row in the discrete Fourier transform and \emph{any} row of the discrete Haar wavelet transform decreases according to an inverse power law of the frequency, and decays sufficiently quickly that the sum of squared maximal correlations scales only logarithmically with the discretization size $N$.   This implies, according to the techniques used in \cite{rw11, rw112, bdwz}, that subsampling rows of the discrete Fourier matrix proportionally to the squared correlation results in a matrix that has the restricted isometry property of near-optimal order subject to appropriate rescaling of the rows.

For reconstruction,  total variation minimization 
\cite{rudin1992nonlinear,candes2002new,osher2003image,strong2003,chan2006total,chambolle2004} will be our algorithm of choice.  In the papers \cite{nw1} and \cite{nw2}, total variation minimization was shown to provide stable and robust image reconstruction provided that the sensing matrix is incoherent with the Haar wavelet basis. 
Following the approach of \cite{nw1}, we prove that from variable density frequency samples, total variation minimization can be used for stable image recovery guarantees. 
%


\subsection{Outline}

The remainder of this paper is organized as follows.  Preliminary notation is introduced in Section \ref{preliminaries}.  The main results of this paper are contained in Section \ref{main}.  Section \ref{background} reviews compressive sensing theory and Section \ref{riplocal} presents recent results on sampling strategies for coherent systems.  The main results on the coherence between Fourier and Haar wavelet bases are provided in Section \ref{local}, and proofs of the main results are contained in Section \ref{proofs}. Section \ref{sec:numerics} illustrates our results by numerical examples. We conclude with a summary and a discussion of open problems in Section \ref{summary}.

{\section{Preliminaries}\label{preliminaries}}

\subsection{Notation}

In this paper, we consider discrete images, that is, $N \times N$ blocks of pixels, and represent them as discrete functions $f \in \C^{N \times N}$. We write $f(t_1, t_2)$ to denote any particular pixel value and for a positive integer $N$, we denote the set $\{1,2,\dots, N \}$ by $[N]$.  By $f_1 \circ f_2$ we denote the Hadamard product, i.e., the image resulting from pointwise products of the pixel values, $f_1 \circ f_2(t_1,t_2)=f_1 (t_1,t_2)f_2(t_1,t_2)$.  On the space of such images, the $\ell_p$ vector norm is given by $\| f \|_p = \big( \sum_{t_1, t_2} | f(t_1, t_2) |^p \big)^{1/p}, 1 \leq p < \infty$, and $\| f \|_{\infty} = \max_{(t_1, t_2)} | f(t_1, t_2) |$.  The inner product inducing the $\ell_2$ vector norm is $\scalprod{f, g} = \sum_{t_1, t_2} \bar{f}(t_1, t_2) g(t_1, t_2)$, where $\bar{z}$ denotes the complex conjugate of number $z \in \C$.
By an abuse of notation, the ``$\ell_0$-norm'' $\| f \|_0 = \#\{(t_1, t_2): f(t_1, t_2) \neq 0\}$
counts the number of non-zero entries of $f$. 

An image $f$ is called
$s$-sparse if $\| f \|_0 \leq s$.  The error of best $s$-term approximation
of an image  $f$ in $\ell_p$ is defined as

{\footnotesize
\[
\sigma_s(f)_p = \inf_{g : \|g\|_0 \leq s} \|f -g \|_p.
\]
}
Clearly, $\sigma_s(f)_p = 0$ if $f$ is $s$-sparse. Informally, $f$ is called compressible
if $\sigma_s(f)_1$ decays quickly as $s$ increases. 

For two nonnegative functions $f(t)$ and $g(t)$ on the real line, we write $f\gtrsim g$ (or $f\lesssim g$) if there exists a constant $C>0$ such that $f(t) \geq C g(t)$ (or $f(t)\leq C g(t)$, respectively) for all $t>0$. 

The discrete directional derivatives of $f \in \C^{N \times N}$ are defined pixel-wise as
{\footnotesize
\begin{align}
f_x \in \C^{N-1 \times N}, \quad \quad f_x(t_1, t_2) &= f(t_1 +1, t_2) - f(t_1, t_2)   \label{Xx} \\
f_y \in \C^{N \times N-1}, \quad \quad
f_y(t_1, t_2) &= f(t_1, t_2+1) - f(t_1, t_2)
  \end{align}
  }
The discrete gradient transform $\nabla: \C^{N \times N} \rightarrow \C^{N \times N \times 2}$ is defined
in terms of the directional derivatives via
{\footnotesize
\begin{equation}
\label{grad}
\nabla f (t_1, t_2) := \Big( f_x(t_1, t_2), \hspace{1mm} f_y(t_1, t_2) \Big),
\end{equation}
}
where the directional derivatives are extended to $N\times N$ by adding zero entries.
The \emph{total variation} semi-norm is the $\ell_1$ norm of the image gradient, 
{\footnotesize
\begin{equation}\label{eq:TV}
\| f \|_{TV} := \| \nabla f \|_1 = \sum_{t_1, t_2} \big( | f_x(t_1, t_2) | + | f_y(t_1, t_2) | \big).
\end{equation}
}
Here we note that our definition is the \textit{anisotropic} version of the total variation semi-norm.  The \textit{isotropic} total variation semi-norm becomes the sum of terms
{\footnotesize
$$
\big|f_x(t_1, t_2) + i f_y(t_1, t_2) \big| = \big( f_x(t_1, t_2)^2 + f_y(t_1, t_2)^2 \big)^{1/2}.
$$
}
The isotropic and anisotropic total variation semi-norms are thus equivalent up to a factor of $\sqrt{2}$.  

\subsection{Bases for sparse representation and measurements}

The Haar wavelet basis is a simple basis which allows for good sparse approximations of natural images.  We will work primarily in two dimensions, but first introduce the univariate Haar wavelet basis as it will nevertheless serve as a building block for higher dimensional bases.
\begin{definition}[Univariate Haar wavelet basis]
The univariate discrete Haar wavelet system is an orthonormal basis of $\C^{2^p}$ consisting of the constant function
$h^0(t) \equiv 2^{-p/2}$, the step function $h^{1}_{0,0}=h^1$ given by

{\footnotesize
\begin{equation}
 h^1(t) =  \left\{ \begin{array}{cc} 2^{-p/2}, & 1 \leq t \leq 2^{p-1}, \nonumber \\
- 2^{-p/2}, & 2^{p-1} < t \leq 2^{p},
\end{array} 
\right.
\end{equation}
}
along with the dyadic step functions 
{\footnotesize
\begin{align*}
  h^1_{n,\ell}(t) &= 2^{\frac{n}{2}}h^1(2^n t -2^p \ell)\\
  &= \begin{cases}
              2^{\frac{n-p}{2}} \quad&\text{for }\qquad \qquad \,  \ell 2^{p-n} \leq t< (\ell+\frac{1}{2}) 2^{p-n}\\
	      -2^{\frac{n-p}{2}} \quad&\text{for }\quad\  (\ell+\frac{1}{2}) 2^{p-n} \leq t < (\ell+1) 2^{p-n}\\
              0\quad &\text{else,}
	\end{cases}
\end{align*}
}
for $(n,\ell)\in\mathbb{Z}^2$ satisfying $0<n<p$ and $0\leq \ell <2^n$.
\end{definition}
To define the bivariate Haar wavelet basis of $\C^{2^p \times 2^p}$, we extend the univariate system by the window functions
{\footnotesize
\begin{align*}
  h^0_{n,\ell}(t) &= 2^{\frac{n}{2}}h^0(2^n t -2^p\ell)\\ &= \begin{cases}
              2^{\frac{n-p}{2}} \quad&\text{for }\quad   \ell 2^{p-n} \leq t< (\ell +1) 2^{p-n}\\
              0\quad &\text{else.}
	\end{cases}
\end{align*}
}
The bivariate Haar wavelet system can now be defined via tensor products of functions in the extended univariate system.
In order for the system to form an orthonormal basis of $\C^{2^p\times 2^p},$ only tensor products of univariate functions with the same scaling parameter $n$ are included. 

\begin{definition}[Bivariate Haar wavelet basis]
\label{bvhaar}
The bivariate Haar system of $\C^{2^p\times 2^p}$ consists of the constant function $h^{(0,0)}$ given by 
{\footnotesize
\begin{equation*}
 h^{(0,0)}(t_1,t_2) = h^0(t_1) h^0(t_2) \equiv 2^{-p}
\end{equation*}
}
and the functions $h^e_{n,\ell}$ with indices in the range $0\leq n <p$, $\ell=(\ell_1,\ell_2) \in \Z^2 \cap [0,2^n)^2$, and \\
$e = (e_1, e_2) \in  \big\{ \{0,1\}, \{1,0\}, \{1,1\} \big\}$ given by 
{\footnotesize
\begin{equation*}
 h^{e}_{n,\ell}(t_1,t_2) = h_{n,\ell_1}^{e_1}(t_1) h_{n,\ell_2}^{e_2}(t_2).
\end{equation*}
}
We denote by ${\cal H}$ the bivariate Haar transform $ f \rightarrow \big( \scalprod{f, h_{n, \ell}} \big)_{n, \ell}$ and, by a slight abuse of notation, also the unitary matrix representing this linear map. 
\end{definition}

\noindent We will also work with discrete Fourier measurements. 

\begin{definition}[Discrete Fourier basis]

Let $N = 2^p$. The one-dimensional discrete Fourier system is an orthonormal basis of $\C^N$ consisting of the vectors 

{\footnotesize
\begin{equation}
 \varphi_{k}(t) = \frac{1}{\sqrt{N}} e^{i 2\pi t k/N}, \quad -N/2+1 \leq t \leq N/2,
\end{equation}
}
indexed by discrete frequencies in the range  $-N/2 + 1 \leq k \leq N/2$.
The two-dimensional discrete Fourier basis of $\C^{N \times N}$ is just a tensor product of one-dimensional bases, namely
{\footnotesize
\begin{align}
\label{defFour}
 \varphi_{k_1, k_2}(t_1, t_2) = \frac{1}{N} &e^{i 2\pi ( t_1 k_1 +t_2 k_2) / N}, \nonumber\\ & -N/2+1 \leq t_1, t_2 \leq N/2,
\end{align}
}
indexed by discrete frequencies in the range $-N/2 + 1 \leq k_1, k_2 \leq N/2.$

We denote by ${\cal F}$ the two-dimensional discrete Fourier transform $ f \rightarrow \big( \scalprod{f, \varphi_{k_1, k_2}} \big)_{k_1,k_2}$ and, again, also the associated unitary matrix.  Finally,  we denote by ${\cal F}_{\Omega}$ its restriction to a set of frequencies $\Omega \subset [N]^2$.  

\end{definition}

{\section{Main results}\label{main}}

Our main results say that appropriate variable density subsampling of the discrete Fourier transform will with high probability produce a set of measurements admitting stable image reconstruction via total variation minimization or $\ell_1$-minimization.  

While our recovery guarantees are robust to measurement noise, our guarantees are based on a \emph{weighted} $\ell_2$-norm such that high-frequency measurements have higher sensitivity to noise. This noise model results from the proof; empirical studies, however, suggest that the more standard uniform noise model yields superior performance. We refer the reader to Section~\ref{sec:numerics} for details.
Our first result concerns stable recovery guarantees for total variation minimization.

\begin{theorem}
\label{thm1}
Fix integers $N = 2^p, m,$ and $s$ such that $s\gtrsim \log(N)$ and 

{\footnotesize
\begin{equation}
\label{m:tv}
m \gtrsim s \log^{3}(s)\log^5(N). 
\end{equation}
}
Select $m$ frequencies $\{ (\omega_1^j, \omega_2^j ) \}_{j=1}^m \subset \{-N/2+1, \dots, N/2\}^2$ i.i.d. according to

{\footnotesize
\begin{align}
\label{inverseD}
\emph{\text{Prob}} &\big[ (\omega_1^j, \omega_2^j) = (k_1, k_2) \big]  = C_N \min\left(C, \frac{1}{k_1^2 + k_2^2 }\right)\nonumber \\&=: \eta(k_1, k_2), \quad -N/2+1 \leq k_1, k_2 \leq N/2,
\end{align}
}
where $C$ is an absolute constant and $C_N$ is chosen such that $\eta$ is a probability distribution. \\
  Consider the weight vector $\rho = (\rho_j)_{j=1}^m$ with $\rho_j = (1/\eta(\omega^j_1, \omega^j_2))^{1/2}$, and assume that the noise vector $\xi = (\xi_j)_{j=1}^m$ satisfies $\|\rho \hspace{.5mm} \circ \hspace{.5mm}  \xi\|_2\leq  \varepsilon \sqrt{m} $, for some $\epsilon>0$.  Then with probability exceeding $1 - N^{-C\log^3(s)}$, the following holds for all images $f \in \C^{N \times N}$:
 
\noindent Given noisy partial Fourier measurements $y = {\cal F}_{\Omega}f + \xi$, the estimation

{\footnotesize
\begin{equation}
\label{TV}
f^{\#} = \argmin_{g \in \C^{N \times N}} \| g \|_{TV}  \quad  \textrm{such that}  \quad \|\rho\circ( {\cal F}_{\Omega} g - y) \|_2   \leq \varepsilon\sqrt{m} ,
\end{equation}
}
approximates $f$ up to the noise level and best $s$-term approximation error of  its gradient:

{\footnotesize
\begin{equation}
\label{stable3}
\| f -f^{\#} \|_2 \lesssim  \frac{ \| \nabla f -  (\nabla f)_s \|_1}{\sqrt{s}}+ \varepsilon.
\end{equation}
}
\end{theorem}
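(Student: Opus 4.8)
The plan is to reduce the statement to two established pillars---a restricted isometry property (RIP) coming from variable-density sampling, and the total-variation recovery result of \cite{nw1}---glued together by the coherence estimate of Theorem~\ref{Bincoherent}. First I would recast the weighted constraint in \eqref{TV} in a standard normalized form by setting
\[
\mathcal{A} := \tfrac{1}{\sqrt{m}}\,\mathrm{diag}(\rho)\,\mathcal{F}_{\Omega},
\]
where the choice $\rho_j = \eta(\omega^j)^{-1/2}$ makes $\mathcal{A}$ an isometry in expectation, $\mathbb{E}[\mathcal{A}^{*}\mathcal{A}] = I$, precisely because $\{\varphi_k\}$ is orthonormal and each squared weight cancels the sampling probability. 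Writing $e := \tfrac{1}{\sqrt m}\,\rho\circ\xi$, the hypothesis $\|\rho\circ\xi\|_2\le\varepsilon\sqrt m$ gives $\|e\|_2\le\varepsilon$, and the feasibility constraint in \eqref{TV} becomes exactly $\|\mathcal{A}(g-f)-e\|_2\le\varepsilon$. It therefore suffices to show that $\mathcal{A}$ composed with the inverse Haar transform $\mathcal{H}^{*}$ satisfies the RIP of an appropriate order with a small constant; the bound \eqref{stable3} will then be supplied by the TV machinery.

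The RIP step is where the local coherence enters. The rows of $\mathcal{F}_{\Omega}\mathcal{H}^{*}$ are the Fourier--Haar cross-correlations $\big(\langle \varphi_{\omega^j}, h_\nu\rangle\big)_\nu$, so $\mathcal{A}\mathcal{H}^{*}$ is a randomly subsampled, preconditioned bounded orthonormal system. By Theorem~\ref{Bincoherent} the local coherence obeys $\mu_{(k_1,k_2)}^{2} = \sup_\nu |\langle \varphi_{(k_1,k_2)}, h_\nu\rangle|^{2} \lesssim \min\!\big(1,(k_1^2+k_2^2)^{-1}\big)$, which is proportional to the sampling density $\eta$; consequently the preconditioned entries $\eta(\omega^j)^{-1/2}|\langle \varphi_{\omega^j}, h_\nu\rangle|$ are uniformly bounded by a constant multiple of $\|\mu\|_2$, and the same power-law decay yields $\|\mu\|_2^2 = \sum_{k}\mu_k^2 \lesssim \log N$. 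Feeding this into the variable-density RIP theorem of \cite{rw11, bdwz} (cf.\ Section~\ref{riplocal}), $\mathcal{A}\mathcal{H}^{*}$ has the RIP of order $s$ with small constant once $m$ exceeds a polylogarithmic multiple of $\|\mu\|_2^{2}\,s \sim s\log N$; tracking these logarithmic factors together with the prescribed tail bound produces the explicit count \eqref{m:tv} and the failure probability $N^{-C\log^{3}(s)}$.

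With the RIP established, I would invoke the total-variation recovery result of \cite{nw1}: whenever a measurement map composed with the bivariate Haar transform has the RIP of the required order and constant, the minimizer \eqref{TV} satisfies exactly the bound \eqref{stable3}. Internally this rests on two ingredients I would cite rather than reprove: a stable $\ell_2$-recovery of the image gradient, obtained by combining optimality $\|f^{\#}\|_{TV}\le\|f\|_{TV}$ with the Haar-domain RIP in a cone argument, and a bivariate discrete Sobolev/Poincar\'e-type comparison (together with a scale-by-scale bound of the Haar coefficients by local gradient sums) that converts gradient control into the image $\ell_2$-error while transferring $\sigma_s(\nabla f)_1$ between the two transforms. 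Applying this black box on the high-probability event where the RIP holds yields \eqref{stable3}.

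The hard part will be the coherence estimate underlying the RIP step, namely Theorem~\ref{Bincoherent}: one must bound $\sup_\nu|\langle \varphi_{(k_1,k_2)}, h^{e}_{n,\ell}\rangle|$ by the inverse power law \emph{uniformly over all scales $n$, shifts $\ell$, and types $e$} of the bivariate Haar system, sharply enough that $\sum_k\mu_k^2$ grows only like $\log N$ rather than polynomially. The tensor-product structure of both bases should reduce this to a one-dimensional Fourier--Haar inner-product estimate, but keeping the decay uniform at the finest scales---where a single frequency can be near-resonant with many wavelets---is the delicate point, and it is exactly this logarithmic (rather than polynomial) total coherence that makes the near-optimal count \eqref{m:tv} possible. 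Once that is in hand, matching the hypotheses at the two interfaces ($\eta\propto\mu^2$ for the RIP theorem, and the RIP order and constant demanded by \cite{nw1}) is routine.
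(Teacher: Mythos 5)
Your overall architecture does match the paper's: precondition by $\mathrm{diag}(\rho)$ so that $\tfrac{1}{\sqrt m}\,\mathrm{diag}(\rho)\,\mathcal F_\Omega \mathcal H^{*}$ becomes a subsampled bounded orthonormal system with boundedness constant $\|\kappa\|_2 \lesssim \sqrt{\log N}$ (Theorem~\ref{Bincoherent}), apply the variable-density RIP result (Theorem~\ref{thm:wBOS}), and finish with Needell--Ward-style TV machinery; your reformulation of the weighted constraint and noise model is also exactly right.

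There is, however, a genuine gap in your RIP-order accounting, and it sits precisely at the point that distinguishes Theorem~\ref{thm1} from Theorem~\ref{thm2}. You establish the RIP of $\mathcal A\mathcal H^{*}$ at order $s$, which by Theorem~\ref{thm:wBOS} costs only $m \gtrsim s\log^3(s)\log^2(N)$, and then assert that ``tracking logarithmic factors'' yields the count \eqref{m:tv}; it does not --- RIP at order $s$ explains only Theorem~\ref{thm2}'s count, and the TV argument cannot be closed at that order. The reason is that TV minimality gives a cone constraint on $\nabla u$ over the $s$-element gradient support, but the RIP lives in the Haar domain, so this constraint must be transferred: via Proposition~\ref{cdpx} (decay of Haar coefficients of mean-zero images against $\|\cdot\|_{TV}$), Lemma~\ref{edge} (each edge meets at most $6\log_2 N$ wavelets), and Lemma~\ref{l1haar}, one obtains a Haar-domain cone constraint with inflated support size $k = 6s\log N$ and cone parameter $\gamma \sim \log(N^2/s)$, and Proposition~\ref{cone-tube} then demands the RIP at order $5k\gamma^2 \sim s\log^3 N$. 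Feeding $\overline s \sim s\log^3 N$ back into Theorem~\ref{thm:wBOS} is exactly what costs the three extra factors of $\log N$ and produces \eqref{m:tv}. Your closing remark that matching ``the RIP order demanded by \cite{nw1}'' is routine hides the crux: without identifying this inflation you either claim (incorrectly, within this argument) that $m \sim s\log^3(s)\log^2 N$ suffices for TV recovery, or you cannot run the cone--tube step at all. Relatedly, the paper does not invoke an off-the-shelf TV theorem from \cite{nw1} --- the published results there concern sensing matrices incoherent with the Haar basis and unweighted noise, which is exactly what fails for Fourier samples --- it re-runs the cone/tube argument in the weighted setting, citing only the abstract Proposition~\ref{cone-tube} and Proposition~\ref{cdpx}. (Your sketch of the black box's interior, gradient recovery plus a Sobolev--Poincar\'e comparison, is closer to \cite{nw2}; the argument actually needed here never recovers the gradient but works entirely on the Haar coefficients.) So your black box must be opened at least far enough to read off the required RIP order and to verify it tolerates the weighted constraint; once you do that, your plan coincides with the paper's proof.
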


Disregarding measurement noise, the error rate provided in Theorem~\ref{thm1} (and also the one in Theorem~\ref{thm2} below) is optimal up to logarithmic factors in the ambient image dimension. This follows from classical results about the Gel'fand width of the $\ell_1$-ball due to Kashin~\cite{Kas77:The-widths} and Garnaev--Gluskin~\cite{GG84:On-widths}.
As mentioned above, our noise model is non-standard, so the behavior for noisy signals is not covered by these lower bounds.

Our second result focuses on stable image reconstruction by $\ell_1$-minimization in the Haar wavelet transform domain. It is a direct consequence of applying the Fourier-wavelet incoherence estimates derived in Theorem \ref{thm:wBOS} to Theorem \ref{Bincoherent}.   

\begin{theorem}
\label{thm2}
Fix integers $N = 2^p, m,$ and $s$ such that $s\gtrsim \log(N)$ and 

{\footnotesize
\begin{equation}
\label{m:wav}
m \gtrsim s\log^{3}(s)\log^2(N).
\end{equation}
}
Select $m$ frequencies $\Omega = \{ (\omega_1^j, \omega_2^j ) \}_{j=1}^m \subset \{-N/2+1, \dots, N/2\}^2$ i.i.d. according to
the density $\eta$ as in \eqref{inverseD} and  assume again that the noise vector $\xi = (\xi_j)_{j=1}^m$ satisfies the weighted $\ell_2$-constraint with weight $\rho$ and noise level $\varepsilon$ as in Theorem~\ref{thm1}.  
Then with probability exceeding $1 - N^{-C\log^3(s)}$, the following holds for all images $f \in \C^{N \times N}$:  Given noisy measurements $y = {\cal F}_{\Omega}f + \xi$, the estimation

{\footnotesize
\begin{equation}
\label{l1-Haar}
f^{\#} = \argmin_{g \in \C^{N \times N}} \| {\cal H}g \|_1  \quad  \textrm{such that}  \quad \|\rho \circ ( {\cal F}_{\Omega} g - y)\|_2   \leq \varepsilon\sqrt{m}
\end{equation}
}
approximates $f$ up to the noise level and best $s$-term approximation error in the bivariate Haar basis: 

{\footnotesize
\begin{equation}
\label{stable3-Haar}
\| f -f^{\#} \|_2 \lesssim \frac{ \|  {\cal H}f -  ({\cal H}f)_s \|_1}{\sqrt{s}}+ \varepsilon.
\end{equation}
}
\end{theorem}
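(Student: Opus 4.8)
The plan is to reduce the recovery guarantee to the restricted isometry property (RIP) of a suitably preconditioned measurement matrix, and then to invoke the standard stable-and-robust RIP recovery theorem recalled in Section~\ref{background}. Since $\mathcal{H}$ is unitary, recovering $f$ from $\mathcal{F}_\Omega f$ is equivalent to recovering the coefficient vector $x = \mathcal{H}f$ from the composite system $\mathcal{F}_\Omega \mathcal{H}^{*}$: writing $g = \mathcal{H}^{*}x$ turns the program \eqref{l1-Haar} into $\min \|x\|_1$ subject to $\|\rho\circ(\mathcal{F}_\Omega \mathcal{H}^{*}x - y)\|_2 \le \varepsilon\sqrt{m}$. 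After dividing by $\sqrt{m}$ and absorbing the weights, the relevant matrix is $A = \tfrac{1}{\sqrt{m}}\,\mathrm{diag}(\rho)\,\mathcal{F}_\Omega \mathcal{H}^{*}$, and the true vector $\mathcal{H}f$ is feasible because $\tfrac{1}{\sqrt{m}}\|\rho\circ\xi\|_2 \le \varepsilon$ by hypothesis.

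First I would identify the local coherences of the composite orthonormal system whose rows are the $\rho$-weighted rows of $\mathcal{F}\mathcal{H}^{*}$. By Theorem~\ref{Bincoherent}, the inner products $\scalprod{\varphi_{k_1,k_2},\,h^{e}_{n,\ell}}$ between a fixed Fourier frequency and any bivariate Haar function obey an inverse-power-law bound in $(k_1,k_2)$, so the local coherence of the row indexed by $(k_1,k_2)$ is controlled by $\big(\min(C,(k_1^2+k_2^2)^{-1})\big)^{1/2}$, that is, precisely by $\sqrt{\eta(k_1,k_2)/C_N}$ up to constants. The decisive consequence of the inverse-square law is that the sum of squared local coherences is only logarithmic, $\sum_{k_1,k_2}\min(C,(k_1^2+k_2^2)^{-1}) \lesssim \log N$, which I would verify by comparing the sum to the integral $\int\!\!\int r^{-2}\,r\,dr\,d\theta$ over the frequency square.

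Next I would apply the preconditioned variable-density RIP result, Theorem~\ref{thm:wBOS}. The choice $\rho_j = \eta(\omega^j)^{-1/2}$ is exactly the preconditioning that renders the weighted system isotropic, $\mathbb{E}[\rho^2 \varphi\varphi^{*}] = \sum_{k}\varphi_k\varphi_k^{*}=\mathrm{Id}$, while simultaneously flattening the row norms to the uniform bound $\|\mu\|_2 = \big(\sum \mu^2\big)^{1/2}\lesssim \sqrt{\log N}$. Feeding this effective coherence into the theorem yields the RIP of order $s$ for $A$ with the required constant, provided $m \gtrsim \|\mu\|_2^{2}\,s\log^{3}(s)\log(N) \lesssim s\log^{3}(s)\log^{2}(N)$, and with failure probability at most $N^{-C\log^3(s)}$; this reproduces the sampling bound \eqref{m:wav} and the stated probability. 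Finally, with $A$ satisfying the RIP, the recovery theorem from Section~\ref{background} gives $\|\mathcal{H}f - \mathcal{H}f^{\#}\|_2 \lesssim \sigma_s(\mathcal{H}f)_1/\sqrt{s} + \varepsilon$, and unitarity of $\mathcal{H}$ turns the left side into $\|f-f^{\#}\|_2$ and the right side into the claimed bound \eqref{stable3-Haar}.

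The genuinely hard part of this program is not the assembly but the coherence estimate itself: establishing the frequency-dependent inner-product bounds of Theorem~\ref{Bincoherent} for the bivariate Haar system, whose tensor factors are tied to a common scale, and confirming that the square-summed coherence grows only like $\log N$, is what makes the near-optimal measurement count \eqref{m:wav} possible. A secondary point that needs care is the bookkeeping of the preconditioning, namely checking that the weights $\rho_j=\eta(\omega^j)^{-1/2}$ both symmetrize the expected Gram matrix and match the weighted noise constraint in \eqref{l1-Haar}, so that $\mathcal{H}f$ is feasible and the RIP applies to the correctly normalized system; everything downstream is the standard RIP-to-recovery machinery.
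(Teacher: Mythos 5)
Your proposal is correct and takes essentially the same route as the paper's own (very terse) proof: combine the Fourier--Haar local coherence bound of Theorem~\ref{Bincoherent} (with $\|\kappa\|_2\lesssim\sqrt{\log N}$) with the preconditioned variable-density RIP result of Theorem~\ref{thm:wBOS}, then conclude via the RIP recovery guarantee of Proposition~\ref{Prop:RIP}, the weighted noise constraint being exactly the preconditioning. Your write-up simply makes explicit the bookkeeping the paper leaves implicit (the unitary change of variables $x=\mathcal{H}f$, feasibility of $f$, and the match between the sampling density $\eta$ and the squared coherence bound $\kappa'^2$).
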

\vspace{3mm}
Even though the required number of samples $m$ in Theorem \ref{thm2} is smaller than the number of samples required for the total variation minimization guarantees in Theorem \ref{thm1}, one finds that total variation minimization requires fewer measurements empirically. This may be due to the fact that the gradient of a natural image has stronger sparsity than its Haar wavelet representation. For this reason we focus on total variation minimization. Independent of this observation, we strongly suspect that the additional logarithmic factors in the number of measurements  stated in Theorem \ref{thm1} are an artifact of the proof, and that it should be possible to strengthen the result to obtain a similar recovery guarantee with the number of measurements as in Theorem~\ref{thm2}. Moreover, one should be able to reduce the number of necessary log-factors with a RIP-less approach \cite{candesplan11}. These are important follow-up questions, as the current number of logarithmic factors may limit the direct 
applicability of our results to practical problems.

{\section{Compressive sensing background}\label{background}}
\subsection{The restricted isometry property}
Under certain assumptions on the matrix $\Phi: \C^N \rightarrow \C^m$ and the sparsity level $k$, any $k$-sparse  $x \in \C^N$ can be recovered from $y = \Phi x$ as the solution to the optimization problem:

{\footnotesize
\begin{equation*}
x = \argmin \| z \|_0  \quad  \textrm{such that}  \quad \Phi z = y
\end{equation*}
}
One of the fundamental results in compressive sensing is that this optimization problem, which is NP-hard in general, can be relaxed to an $\ell_1$-minimization problem if one asks that the matrix $\Phi$ restricted to any subset of $2k$ columns be well-conditioned.  This property is quantified via the so-called {\em restricted isometry property} as introduced in \cite{cata06}:
\begin{definition}[Restricted isometry property]
\label{def:RIP}
Let $\Phi \in \C^{m \times N}$. For $s \leq N$, the restricted isometry constant $\delta_s$ 
associated to $\Phi$ is the smallest number $\delta$ for which

{\footnotesize
\begin{equation}
(1-\delta) \|x \|_2^2 \leq \|\Phi x \|_2^2 \leq (1+\delta) \|x \|_2^2
\end{equation}
}
for all $s$-sparse vectors $x \in \C^N$. If $\delta_s \leq\delta$, one says that $\Phi$ has the  restricted isometry property (RIP) of order $s$ and level $\delta$.
\end{definition}
The restricted isometry property ensures stability: not only sparse vectors, but also compressible vectors can be recovered from the measurements via $\ell_1$-minimization.  It also ensures robustness to measurement errors.
\begin{proposition}[Sparse recovery for RIP matrices]
\label{Prop:RIP} 
Assume
that the restricted isometry constant $\delta_{5s}$ of $\Phi \in \C^{m \times N}$
satisfies $\delta_{5s} < \frac{1}{3}$.  Let $x \in \C^N$ and assume noisy measurements $y = \Phi x + \xi$ with $\| \xi \|_2 \leq \varepsilon$. Then

{\footnotesize
\begin{align}
\label{L_1}
x^{\#} = \arg \min_{z \in \C^N} \quad \| z \|_1 \mbox{ subject to } \quad \| \Phi z - y \|_2 \leq \varepsilon
\end{align}
}
satisfies

{\footnotesize
\begin{align}
\label{l2noise}
\|x - x^\#\|_2 \leq \frac{2\sigma_s(x)_1}{\sqrt{s}} + \varepsilon.
\end{align}
}
In particular, reconstruction is exact, $x^\# = x$, if $x$ is $s$-sparse and $\varepsilon = 0$.
\end{proposition}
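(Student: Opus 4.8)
The plan is to run the classical restricted-isometry recovery argument, organized around the error vector $h = x^\# - x$. The first step is to extract the two constraints satisfied by $h$. Since both $x$ and $x^\#$ are feasible for \eqref{L_1}, the triangle inequality gives the \emph{tube constraint} $\|\Phi h\|_2 \le \|\Phi x^\# - y\|_2 + \|y - \Phi x\|_2 \le \varepsilon + \|\xi\|_2 \le 2\varepsilon$. Let $S$ index the $s$ largest-magnitude entries of $x$, so that $\|x_{S^c}\|_1 = \sigma_s(x)_1$. Using that $x^\#$ minimizes the $\ell_1$-norm, $\|x\|_1 \ge \|x+h\|_1$, and splitting the norm over $S$ and $S^c$ via the reverse triangle inequality, yields the \emph{cone constraint} $\|h_{S^c}\|_1 \le \|h_S\|_1 + 2\sigma_s(x)_1$.

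Next I would pass from these constraints to an $\ell_2$ bound through a shelling decomposition. I partition $S^c$ into blocks $S_1, S_2, \dots$ of size $2s$ in order of decreasing magnitude of the entries of $h$, and set $T := S \cup S_1$, so that $|T| = 3s$ and each combined support $T \cup S_j$ with $j \ge 2$ has size $5s$; this is precisely what pins the relevant restricted isometry order at $5s$. Since every entry of $h_{S_{j+1}}$ is dominated by the average magnitude on $S_j$, one gets the standard bound $\sum_{j\ge 2}\|h_{S_j}\|_2 \le (2s)^{-1/2}\|h_{S^c}\|_1$, and combining this with the cone constraint and $\|h_S\|_1 \le \sqrt{s}\,\|h_T\|_2$ controls the tail $h_{T^c}$ in $\ell_2$ by $\|h_T\|_2$ plus a multiple of $\sigma_s(x)_1/\sqrt{s}$.

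The heart of the argument is a lower bound on $\|h_T\|_2$. Writing $\Phi h_T = \Phi h - \sum_{j\ge 2}\Phi h_{S_j}$ and invoking the RIP lower bound $(1-\delta_{5s})\|h_T\|_2^2 \le \|\Phi h_T\|_2^2$, I would expand $\|\Phi h_T\|_2^2 = \langle \Phi h_T, \Phi h\rangle - \sum_{j\ge 2}\langle \Phi h_T, \Phi h_{S_j}\rangle$. The tube constraint bounds the first term by $2\varepsilon\sqrt{1+\delta_{5s}}\,\|h_T\|_2$, while the approximate-orthogonality estimate $|\langle \Phi h_T, \Phi h_{S_j}\rangle| \le \delta_{5s}\|h_T\|_2\|h_{S_j}\|_2$ for disjointly supported vectors (whose joint support has size $5s$) bounds each cross term. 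Dividing by $\|h_T\|_2$ and feeding in the shelling bound produces a self-referential inequality $\|h_T\|_2 \le c\,\delta_{5s}\|h_T\|_2 + (\text{multiples of } \varepsilon \text{ and } \sigma_s(x)_1/\sqrt{s})$ with an explicit absolute $c$.

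The final step is to absorb the $\|h_T\|_2$ term: the hypothesis $\delta_{5s} < 1/3$ makes the coefficient $c\,\delta_{5s}$ strictly below $1$, so the inequality solves for $\|h_T\|_2$, and then $\|h\|_2 \le \|h_T\|_2 + \|h_{T^c}\|_2$ together with the tail bound gives \eqref{l2noise}; the exact-recovery claim follows by setting $\varepsilon = 0$ and $\sigma_s(x)_1 = 0$. The step I expect to be the real obstacle is quantitative rather than structural: one must choose the block size and track every factor of $\delta_{5s}$ and $\sqrt{1\pm\delta_{5s}}$ with enough care that the threshold $1/3$ genuinely suffices to make the contraction work \emph{and} that the resulting constants collapse to the clean values $2$ and $1$ appearing in \eqref{l2noise}, rather than to the looser generic constants that a careless bookkeeping would produce.
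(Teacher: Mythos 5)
Your argument is correct in structure, but it takes a genuinely different---and much longer---route than the paper. The paper's entire proof is your first step: it extracts the tube constraint from feasibility and the cone constraint $\|h_{S^c}\|_1 \le \|h_S\|_1 + 2\sigma_s(x)_1$ from minimality, and then invokes Proposition~\ref{cone-tube} (quoted from \cite{nw1}) with $\gamma = 1$, $k = s$, and ${\xi} = 2\sigma_s(x)_1$, whose hypothesis is precisely an RIP of order $5k\gamma^2 = 5s$ at level $\delta < 1/3$. Everything you do after extracting the two constraints---the shelling of $S^c$ into blocks of size $2s$, the approximate-orthogonality bound $|\langle \Phi h_T, \Phi h_{S_j}\rangle| \le \delta_{5s}\|h_T\|_2\|h_{S_j}\|_2$, and the absorption step using $\delta_{5s}<1/3$---amounts to reproving that cited proposition, which is itself established in \cite{nw1} by essentially this Cand\`es--Romberg--Tao scheme. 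What the paper's route buys is modularity: Proposition~\ref{cone-tube} is needed anyway in its full strength (with $\gamma \sim \log N$ and $k \sim s\log N$) for the proof of Theorem~\ref{thm1}, so Proposition~\ref{Prop:RIP} comes for free. What your route buys is a self-contained proof with explicit constants, independent of the external reference.

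On the issue you flag as the real obstacle: your worry is justified, but it cuts against the statement rather than against your argument. Tracking your constants at $\delta_{5s} = 1/3$ gives roughly $\|x - x^\#\|_2 \le 3.3\,\sigma_s(x)_1/\sqrt{s} + 9.2\,\varepsilon$, and no bookkeeping within this scheme will collapse these to the literal values $2$ and $1$ of \eqref{l2noise}; in particular a coefficient of $1$ on $\varepsilon$ is already borderline for a fully orthogonal $\Phi$, where the feasibility tube alone forces errors of size $\varepsilon$. But the paper's own proof does not deliver those constants either: Proposition~\ref{cone-tube} concludes only $\|u\|_2 \lesssim {\xi}/(\gamma\sqrt{k}) + \varepsilon$ with unspecified absolute constants, so \eqref{l2noise} must be read with generic constants---which is exactly what your argument establishes. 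This is a looseness in the paper's statement, not a gap in your proof.
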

There are stronger versions of this result which allow for weaker constraints on the restricted isometry constant \cite{moli}. However, our version is a corollary of the following proposition, which appears as Proposition 2 in \cite{nw1}, and generalizes the results from \cite{crt}. This proposition will also play an important role in the proof of our main results.

\begin{proposition}[Stable recovery for RIP matrices, \cite{nw1}]
\label{cone-tube}
Suppose that $\gamma\geq 1$ and $\Phi \in \C^{m \times N}$ satisfies the restricted isometry property of order at least $5k\gamma^2$ and level $\delta < 1/3$, and suppose that $ u \in \C^N$ satisfies a tube constraint

{\footnotesize
\begin{equation*}
\| \Phi  u \|_2 \lesssim \varepsilon.
\end{equation*}
}
Suppose further that for a subset $S$ of cardinality $|S| = k$, the signal $ u$ satisfies a cone constraint

{\footnotesize
\begin{equation}
\label{cc}
\|  u_{S^c} \|_1 \leq \gamma \|  u_S \|_1 + {\xi}.
\end{equation}}
Then 

{\footnotesize
\begin{equation}\label{eq:h2}
\|  u \|_2 \lesssim \frac{{\xi}}{\gamma\sqrt{k}} + \varepsilon.
\end{equation}
}
\end{proposition}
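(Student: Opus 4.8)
The plan is to run the classical RIP-based argument, decomposing $u$ by decreasing coefficient magnitude and playing the tube constraint against the cone constraint. First I would set $S_0 = S$ and partition the complement $S^c$ into consecutive blocks $S_1, S_2, \dots$, where $S_1$ indexes the $M$ largest-magnitude entries of $u$ on $S^c$, $S_2$ the next $M$ largest, and so on, for a block size $M = 4k\gamma^2$ to be justified below. The purpose of this choice is twofold: since $\gamma \geq 1$ we have $k \leq k\gamma^2$, so the head $S_0 \cup S_1$ has cardinality $k + 4k\gamma^2 \leq 5k\gamma^2$, and each tail block $S_j$ has cardinality $M \leq 5k\gamma^2$; hence the assumed RIP of order $5k\gamma^2$ applies to all of these index sets.

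Next I would establish the standard tail estimate. Because each entry of $u$ on $S_j$ is dominated by the average magnitude of the entries on the preceding block $S_{j-1}$, one obtains $\| u_{S_j} \|_2 \leq M^{-1/2}\| u_{S_{j-1}} \|_1$ for every $j \geq 2$, and summing yields
\[
\sum_{j \geq 2} \| u_{S_j} \|_2 \leq M^{-1/2} \| u_{S^c} \|_1 .
\]
Invoking the cone constraint \eqref{cc} followed by Cauchy--Schwarz gives $\| u_{S^c}\|_1 \leq \gamma \| u_S \|_1 + \xi \leq \gamma \sqrt{k}\, \| u_{S_0 \cup S_1} \|_2 + \xi$, so the tail is controlled by the head up to the additive term $\xi$.

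For the head I would use the tube constraint together with the RIP. Writing $\Phi u_{S_0 \cup S_1} = \Phi u - \sum_{j \geq 2} \Phi u_{S_j}$, applying the lower RIP bound on the left and the upper RIP bound to each block on the right produces
\[
\sqrt{1-\delta}\, \| u_{S_0 \cup S_1}\|_2 \lesssim \varepsilon + \sqrt{1+\delta}\, M^{-1/2}\big( \gamma \sqrt{k}\, \| u_{S_0 \cup S_1}\|_2 + \xi \big).
\]
With $M = 4k\gamma^2$ the coefficient $\sqrt{1+\delta}\,M^{-1/2}\gamma\sqrt{k}$ equals $\tfrac12\sqrt{1+\delta}$, and the level bound $\delta < 1/3$ guarantees $\sqrt{1-\delta} - \tfrac12\sqrt{1+\delta} > 0$, so the $\| u_{S_0 \cup S_1}\|_2$ term on the right can be absorbed into the left. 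This yields $\| u_{S_0 \cup S_1}\|_2 \lesssim \varepsilon + \xi/(\gamma\sqrt{k})$, and combining $\| u \|_2 \leq \| u_{S_0 \cup S_1}\|_2 + \sum_{j \geq 2}\| u_{S_j}\|_2$ with the tail estimate and this head bound gives \eqref{eq:h2}.

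The main obstacle is the absorption step: the argument only closes because the block size $M$ is calibrated so that $k + M$ still fits inside the RIP order $5k\gamma^2$ while simultaneously forcing $\tfrac12\sqrt{1+\delta}$ to be strictly below $\sqrt{1-\delta}$. This is precisely where the RIP order $5k\gamma^2$ and the level $\delta < 1/3$ are used in tandem; making the constants cooperate, rather than any single inequality, is the delicate point.
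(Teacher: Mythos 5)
Your proof is correct, and it is essentially the argument behind the result as the paper uses it: the paper itself does not reprove this proposition but defers to \cite{nw1}, whose proof is exactly this Cand\`es-style block decomposition of $S^c$ into sets of size proportional to $k\gamma^2$, with the tail controlled by the cone constraint and the head absorbed via the tube constraint and the RIP of order $5k\gamma^2$. Your calibration $M = 4k\gamma^2$, the resulting absorption coefficient $\tfrac12\sqrt{1+\delta} < \sqrt{1-\delta}$ for $\delta < 1/3$, and the final bound $\|u\|_2 \lesssim \varepsilon + \xi/(\gamma\sqrt{k})$ all check out.
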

\noindent Indeed, Proposition~\ref{Prop:RIP} follows from Proposition~\ref{cone-tube} by noting that the minimality of $x^\#$ implies a cone constraint for the residual $x - x^\#$ over the support of the $s$ largest-magnitude entries of $x$.  The proof of Proposition \ref{cone-tube} can be found in \cite{nw1}.

\subsection{Bounded orthonormal systems}
While the strongest known results on the restricted isometry property concern random matrices with independent entries such as Gaussian or Bernoulli, a scenario that has proven particularly useful for applications is that of structured random matrices with rows chosen from a basis incoherent to the basis inducing sparsity (see below for a detailed discussion on the concept of incoherence).  The resulting sampling schemes correspond to \emph{bounded orthonormal systems}, and such systems have been extensively studied in the compressive sensing literature (see \cite{ra09-1} for an expository article including many references). 

\begin{definition}[Bounded orthonormal system] 

Consider a set $T$ equipped with probability measure $\nu$.
\begin{itemize}
 \item   A set of functions $\{\psi_j:T\rightarrow \C, \hspace{.5mm} j\in[N]\}$ is called an {\em orthonormal system with respect to $\nu$} if $\int_T \bar{\psi}_j(x)\psi_k(x)d\nu(x)=\delta_{jk}$, where $\delta_{jk}$ denotes the Kronecker delta.
 \item An orthonormal system is said to be \emph{bounded} with bound $K$ if $\sup_{j\in[N]} \|\psi_j(x)\|_\infty\leq K$.
\end{itemize}\label{def:BOS}
\end{definition}
For example, the basis of complex exponentials $\psi_j(x) = \exp{(i 2 \pi j x)}$  forms a bounded orthonormal system with optimally small constant $K=1$ with respect to the uniform measure on $T = \{0,\frac{1}{N}, \dots, \frac{N-1}{N} \}$, and $d$-dimensional tensor products of complex exponentials form bounded orthonormal systems with respect to the uniform measure on the set $T^d$. A {\em random sample} of an orthonormal system is the vector $(\psi_1(x), \hdots, \psi_N(x))$, where $x$ is a random variable drawn according to the associated distribution $\nu$.   Any matrix whose rows are independent random samples of a bounded orthonormal system, such as the uniformly subsampled discrete Fourier matrix, will have the restricted isometry property:

\begin{proposition}[RIP for bounded orthonormal systems, \cite{ra09-1}]
\label{BOS} 
Consider 
the matrix $\Psi \in \C^{m \times N}$ whose rows are independent random samples of an 
orthonormal system $\{ \psi_j$, $j \in [N] \}$ with bound $K\geq 1$ and orthogonalization measure $\nu$.
If 

{\footnotesize
\begin{equation}\label{BOS:RIP:cond}
m \gtrsim \delta^{-2} K^2 s \log^3(s) \log(N),
\end{equation}
}
for some $s\gtrsim \log(N)$\footnote{For matrices consisting of uniformly subsampled rows of the discrete Fourier matrix, it has been shown in \cite{CGV12} that this constraint is not necessary.}, then with probability at least 
$1-N^{-C \log^3(s)},$ 
the restricted isometry constant $\delta_s$ of $\frac{1}{\sqrt{m}} \Psi$ satisfies $\delta_s \leq \delta$.
\end{proposition}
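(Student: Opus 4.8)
The plan is to express the restricted isometry constant as the supremum of a random process and control it by chaining, following the strategy of Rudelson--Vershynin as refined in \cite{ra09-1}. Write $X_1, \dots, X_m$ for the i.i.d. random rows of $\Psi$, so that $X_\ell = (\psi_1(t_\ell), \dots, \psi_N(t_\ell))$ with $t_\ell$ drawn from $\nu$. By orthonormality of the system, $\mathbb{E}\big[\tfrac1m \Psi^*\Psi\big] = I$, and hence
\[
\delta_s = \sup_{x \in D_{s,2}} \Big| \tfrac1m \|\Psi x\|_2^2 - \|x\|_2^2 \Big| = \sup_{x \in D_{s,2}} \Big| \tfrac1m \sum_{\ell=1}^m \big( |\langle X_\ell, x\rangle|^2 - \mathbb{E}|\langle X_\ell, x\rangle|^2 \big)\Big|,
\]
where $D_{s,2} = \{x : \|x\|_2 \le 1, \, \|x\|_0 \le s\}$ is the compact set of $s$-sparse vectors in the unit ball. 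Thus $\delta_s$ is precisely the uniform deviation, over the index set $D_{s,2}$, of an empirical average of i.i.d. terms from its expectation.

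Next I would bound $\mathbb{E}\delta_s$ by symmetrization and chaining. Introducing Rademacher variables $\varepsilon_\ell$ yields $\mathbb{E}\delta_s \le \tfrac{2}{m}\,\mathbb{E}\sup_{x \in D_{s,2}} \big| \sum_\ell \varepsilon_\ell |\langle X_\ell, x\rangle|^2 \big|$; conditionally on the $X_\ell$ this is a Rademacher chaos, which I would control by Dudley's entropy integral with respect to the random pseudometric $d(x,y)^2 = \sum_\ell (|\langle X_\ell,x\rangle|^2 - |\langle X_\ell,y\rangle|^2)^2$, or equivalently by recognizing the whole quantity as a suprema-of-chaos process and invoking a Talagrand $\gamma_2$-functional bound. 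The boundedness $\|\psi_j\|_\infty \le K$ enters both to relate $d$ to the Euclidean metric and to bound $\sup_\ell |\langle X_\ell, x\rangle|$ on sparse $x$; the latter is where $\delta_s$ re-enters self-referentially, since $\sup_\ell |\langle X_\ell,x\rangle| \lesssim K\sqrt{s}\,\sqrt{1+\delta_s}$. The engine of the estimate is the two-scale covering bound for $D_{s,2}$: at small radii one uses the volumetric estimate $\log \mathcal{N}(D_{s,2}, \|\cdot\|_2, u) \lesssim s\log(eN/s) + s\log(1/u)$ over the $\binom{N}{s}$ supports, and at large radii an empirical (Maurey-type) argument keeps the integral finite. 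Carrying these through yields a bound of the form $\mathbb{E}\delta_s \lesssim K\sqrt{s\log^3(s)\log(N)/m}$, so that the hypothesis $m \gtrsim \delta^{-2} K^2 s \log^3(s)\log(N)$ forces $\mathbb{E}\delta_s \le \delta/2$.

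To upgrade this in-expectation estimate to the stated high-probability statement, I would apply a concentration inequality for suprema of empirical processes, namely Bousquet's version of Talagrand's inequality, using that each summand $|\langle X_\ell, x\rangle|^2$ is bounded by $K^2 s$ on $D_{s,2}$ and that the variance proxy is likewise controlled by the boundedness $K$. Tuning the deviation level to $\delta/2$ and tracking the $\log^3(s)$ factors produces the failure probability $N^{-C\log^3(s)}$, so that $\delta_s \le \mathbb{E}\delta_s + \delta/2 \le \delta$ on the complementary event.

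I expect the main obstacle to be the chaining/$\gamma_2$ estimate: obtaining the sharp polylogarithmic dependence $\log^3(s)\log(N)$ requires the careful two-regime covering argument for the sparse set — the crude volumetric bound alone would introduce superfluous factors — and the quadratic (second-order chaos) structure makes $\delta_s$ reappear on the right-hand side, so one must disentangle this self-reference to close the bound. By comparison, the final concentration step is comparatively routine once the boundedness $K$ has been used to control the relevant sup-norm and variance parameters.
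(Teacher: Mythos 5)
The paper does not actually prove this proposition; it imports it verbatim from the cited survey \cite{ra09-1}, and the proof given there is precisely the argument you outline: recasting $\delta_s$ as the supremum of an empirical process over the sparse unit ball, symmetrization, Dudley-type chaining with the two-regime (volumetric plus Maurey empirical-method) covering bounds and the resulting self-referential quadratic inequality, followed by a Talagrand/Bousquet concentration step to pass from the expectation bound to the stated failure probability $N^{-C\log^3(s)}$. Your sketch is therefore correct and essentially the same as the source's proof, with only the minor quibble that, conditionally on the samples, $\sum_\ell \varepsilon_\ell |\langle X_\ell, x\rangle|^2$ is a Rademacher \emph{process} (linear in $\varepsilon$) rather than a chaos, though the suprema-of-chaos $\gamma_2$ route you mention as an alternative is also legitimate.
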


 An important  special case of a bounded orthonormal system arises by sampling a function with sparse representation in one basis using measurements from a different, incoherent basis.  
 The {\em mutual coherence} between a unitary matrix $A \in \C^{N \times N}$ with rows $(a_j)_{j=1}^N$ and a unitary matrix $B \in \C^{N \times N}$ with rows $(b_j)_{j=1}^N$ is given by
 
 {\footnotesize
\begin{equation*}
 \mu(A,B)= \sup_{j,k} |\scalprod{a_j,b_k}|
\end{equation*} 
}
 The smallest possible mutual coherence is $\mu = N^{-1/2}$, as realized by the discrete Fourier matrix and the identity matrix.  We call two orthonormal bases $A$ and $B$ \emph{mutually incoherent} if $\mu=O(N^{-1/2})$ or $\mu = O(\log^\alpha(N)N^{-1/2})$.  In this case, the rows $(\tilde b_j)_{j=1}^N$ of the basis $\widetilde B = \sqrt{N} BA^*$ constitute a bounded orthonormal system with respect to the uniform measure. Propositions~\ref{BOS} and \ref{Prop:RIP} then imply that, with high probability, signals $f \in \C^N$ of the form $f = Ax$ for $x$ sparse can be stably reconstructed from uniformly subsampled measurements $y = B f = \widetilde B x$, as $\widetilde B$ has the restricted isometry property.
 
\begin{corollary}[RIP for incoherent systems, \cite{RV08:sparse}]
\label{thm:incoh} 
Consider orthonormal bases $A, B \in \C^{N \times N}$ with mutual coherence bounded by
$\mu(A,B) \leq K N^{-1/2}.$
Fix $\delta > 0$ and integers $N, m$, and $s$ such that $s\gtrsim \log(N)$ and

{\footnotesize
\begin{equation}\label{incoh:RIP:cond}
m \gtrsim \delta^{-2} K^2 s \log^3(s) \log(N).
\end{equation}
}
Consider the matrix $\Phi \in \C^{m \times N}$ formed by uniformly subsampling $m$ rows i.i.d. from the the matrix $\widetilde B = \sqrt{N} B A^*$. 
Then with probability at least 
$1-N^{-c \log^3(s)},$ 
the restricted isometry constant $\delta_s$ of $\frac{1}{\sqrt{m}} \Phi$ satisfies $\delta_s \leq \delta$. 
\end{corollary}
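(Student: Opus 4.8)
The plan is to recognize this as an immediate consequence of Proposition~\ref{BOS}: I will show that the (suitably indexed) rows of the rescaled matrix $\widetilde B = \sqrt{N}BA^*$ constitute a bounded orthonormal system with bound exactly $K$, and then invoke Proposition~\ref{BOS} verbatim.

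First I would fix the bounded orthonormal system structure. Take the index set $T = [N]$ equipped with the uniform probability measure $\nu$, and for each column index $j \in [N]$ define $\psi_j : T \to \C$ by $\psi_j(k) = \sqrt{N}\,(BA^*)_{k,j}$; that is, $\psi_j$ records the $j$-th coordinate of the $k$-th row of $\widetilde B$ as $k$ ranges over $T$. With this identification, drawing a row index $k$ uniformly from $T$ and reading off $(\psi_1(k), \dots, \psi_N(k))$ returns precisely the $k$-th row of $\widetilde B$, so that uniformly subsampling $m$ rows i.i.d. from $\widetilde B$ is exactly the same as forming a matrix whose rows are $m$ independent random samples of the system $\{\psi_j\}$.

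Next I would verify the two defining properties from Definition~\ref{def:BOS}. For orthonormality, since $A$ and $B$ are unitary, $BA^*$ is unitary, so $\widetilde B^*\widetilde B = N(BA^*)^*(BA^*) = N I_N$; hence $\int_T \bar\psi_j \psi_\ell \, d\nu = \frac{1}{N}\sum_{k=1}^N \overline{(\widetilde B)_{k,j}}(\widetilde B)_{k,\ell} = \frac{1}{N}(\widetilde B^*\widetilde B)_{j,\ell} = \delta_{j\ell}$, as required. For boundedness, I would observe that $|(BA^*)_{k,j}| = |\scalprod{a_j, b_k}|$, so that the coherence hypothesis $\mu(A,B) \leq K N^{-1/2}$ yields $\|\psi_j\|_\infty = \sqrt{N}\,\max_k |(BA^*)_{k,j}| \leq \sqrt{N}\cdot K N^{-1/2} = K$ uniformly in $j$. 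Thus $\{\psi_j\}$ is a bounded orthonormal system with bound $K \geq 1$.

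With this in hand the conclusion is immediate: the sample-complexity hypothesis \eqref{incoh:RIP:cond} matches \eqref{BOS:RIP:cond} with the same $K$, so Proposition~\ref{BOS} gives that $\frac{1}{\sqrt{m}}\Phi$ has $\delta_s \leq \delta$ with probability at least $1 - N^{-c\log^3(s)}$. There is no genuine obstacle here beyond the bookkeeping; the only point requiring care is confirming that the $\sqrt{N}$ rescaling is exactly what converts the coherence bound $KN^{-1/2}$ into a BOS bound of $K$ (rather than $K\sqrt{N}$ or $K/\sqrt{N}$), and that uniform subsampling of rows coincides with i.i.d.\ sampling of the orthonormal system. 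All of the analytic work is carried by Proposition~\ref{BOS}.
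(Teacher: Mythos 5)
Your proposal is correct and follows exactly the route the paper indicates: it identifies the rows of $\widetilde B = \sqrt{N}BA^*$ as a bounded orthonormal system with respect to the uniform measure (with the coherence hypothesis converting into the BOS bound $K$) and then invokes Proposition~\ref{BOS}. The paper leaves these verifications implicit, and your bookkeeping — unitarity of $BA^*$ for orthonormality, $|(BA^*)_{k,j}| = |\scalprod{a_j,b_k}| \leq KN^{-1/2}$ for boundedness — fills them in correctly.
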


{\section{Local coherence}\label{riplocal}}

The sparse recovery results in Corollary \ref{thm:incoh} based on mutual coherence do not take advantage of the full range of applicability of bounded orthonormal systems.  As argued in \cite{rw11},  Proposition~\ref{BOS} implies comparable sparse recovery guarantees for a much wider class of sampling/sparsity bases through preconditioning resampled systems.  In the following, we formalize this approach through the notion of local coherence.  

\begin{definition}[Local coherence]
The \emph{local coherence} of an orthonormal basis $\{\varphi_j\}_{j=1 }^N$ of $\C^N$ with respect to the orthonormal  basis $\{\psi_k\}_{k=1 }^N$ of $\C^N$ is the function $\mu^{loc}(\Phi, \Psi) \in \R^N$ defined coordinate-wise by 

{\footnotesize
\begin{equation*} \mu^{loc}_j(\Phi, \Psi) =  \sup\limits_{1\leq k\leq N} |\langle \varphi_j, \psi_k\rangle|, \quad \quad  j = 1,2,\dots, N
\end{equation*}
}
\end{definition}

The following result shows that we can reduce  the number of measurements $m$ in \eqref{thm:incoh} by replacing the bound $K$ on the coherence in \eqref{incoh:RIP:cond} by a bound on the $\ell_2$-norm of the local coherence, provided we sample rows from $\Phi$ appropriately using the  local coherence function.  It can be seen as a direct finite-dimensional analog to Theorem~2.1 in \cite{rw11}, but for completeness, we include a short self-contained proof. 
 
\begin{theorem}\label{thm:wBOS}
 Let $\Phi=\{\varphi_j\}_{j=1}^N$ and  $\Psi =\{\psi_k\}_{k=1}^N$ be orthonormal bases of $\C^N$. Assume the local coherence of $\Phi$ with respect to $\Psi$  is pointwise bounded by the function $\kappa$, that is  $ \sup\limits_{1\leq k\leq N} |\langle \varphi_j, \psi_k\rangle| \leq \kappa_j$. 
Let $s\gtrsim \log(N)$, suppose

{\footnotesize
 \begin{equation}\label{wBOS:RIP:cond}
m \gtrsim \delta^{-2} \|\kappa \|_2^2 s \log^3(s) \log(N),
\end{equation}
}
and choose $m$ (possibly not distinct) indices $j \in \Omega \subset [N]$ i.i.d. from the probability measure $\nu$ on $[N]$ given by
 
 {\footnotesize
\begin{equation*}
 \nu(j) = \frac{\kappa^2_j}{\|\kappa \|_2^2 }.
\end{equation*}
}

 Consider the matrix $A \in \C^{m \times N}$ with entries

 {\footnotesize
\begin{equation}\label{def:wPhi:matrix}
A_{j,k} = \langle \varphi_j, \psi_k\rangle, \quad j \in \Omega, k \in [N],
\end{equation}
}
and consider the diagonal matrix $D = \operatorname{diag}(d) \in \C^{N}$ with $d_{j} = \| \kappa \|_2 / \kappa_j$.
Then with probability at least 
$1-N^{-c \log^3(s)},$ 
the restricted isometry constant $\delta_s$ of the preconditioned matrix $\frac{1}{\sqrt{m}} D A$ satisfies $\delta_s \leq \delta$.
\end{theorem}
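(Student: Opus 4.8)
The plan is to deduce the statement from Proposition~\ref{BOS} by exhibiting the preconditioned rows of $A$ as independent random samples of a suitable bounded orthonormal system. The natural candidate is the system $\{Q_k\}_{k=1}^N$ of functions on the index set $[N]$, equipped with the sampling measure $\nu$, defined by
\[
Q_k(j) = d_j\,\scalprod{\varphi_j, \psi_k} = \frac{\|\kappa\|_2}{\kappa_j}\,\scalprod{\varphi_j, \psi_k}, \qquad j,k \in [N].
\]
With this definition the $j$-th row of $DA$ is exactly $d_j$ times the $j$-th row of $A$, i.e.\ the vector $\big(Q_1(j),\dots,Q_N(j)\big)$, which is precisely a random sample of $\{Q_k\}$ at the point $j$. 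Since the rows of $DA$ are indexed by the i.i.d.\ draws $j \in \Omega$ from $\nu$, the matrix $\frac{1}{\sqrt m}DA$ is of exactly the form $\frac{1}{\sqrt m}\Psi$ appearing in Proposition~\ref{BOS}. The theorem then follows once I verify that $\{Q_k\}$ is orthonormal with respect to $\nu$ and bounded with bound $K = \|\kappa\|_2$.

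First I would check orthonormality. Because $\nu(j) = \kappa_j^2/\|\kappa\|_2^2$ cancels the factor $\overline{d_j}\,d_j = \|\kappa\|_2^2/\kappa_j^2$, the weighted sum collapses to
\[
\int_{[N]} \overline{Q_k(j)}\,Q_l(j)\,d\nu(j) = \sum_{j=1}^N \overline{\scalprod{\varphi_j,\psi_k}}\,\scalprod{\varphi_j,\psi_l} = \sum_{j=1}^N \scalprod{\psi_k,\varphi_j}\,\scalprod{\varphi_j,\psi_l} = \scalprod{\psi_k,\psi_l} = \delta_{kl},
\]
where the penultimate equality is the resolution of the identity for the orthonormal basis $\{\varphi_j\}$. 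Next I would verify boundedness: the local-coherence hypothesis $|\scalprod{\varphi_j,\psi_k}| \leq \kappa_j$ gives $|Q_k(j)| = \frac{\|\kappa\|_2}{\kappa_j}|\scalprod{\varphi_j,\psi_k}| \leq \|\kappa\|_2$ uniformly in $j$ and $k$, so the bound is $K = \|\kappa\|_2$. A quick sanity check supplies the requirement $K \geq 1$ of Proposition~\ref{BOS}: since $\{\varphi_j\}$ and $\{\psi_k\}$ are orthonormal bases, Parseval gives $\sum_j |\scalprod{\varphi_j,\psi_k}|^2 = 1$ for each fixed $k$, and $\kappa_j^2 \geq |\scalprod{\varphi_j,\psi_k}|^2$ then forces $\|\kappa\|_2^2 \geq 1$.

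With these two facts established, I would simply invoke Proposition~\ref{BOS} with $K^2 = \|\kappa\|_2^2$. The hypothesis $m \gtrsim \delta^{-2}\|\kappa\|_2^2\, s\log^3(s)\log(N)$ together with $s \gtrsim \log(N)$ is exactly condition~\eqref{BOS:RIP:cond} for this system, so with probability at least $1 - N^{-c\log^3(s)}$ the restricted isometry constant of $\frac{1}{\sqrt m}DA$ satisfies $\delta_s \leq \delta$. I do not expect a genuine obstacle here, since the entire content lies in choosing the correct preconditioned system $Q_k(j) = d_j A_{j,k}$; the only step requiring care is the conjugation bookkeeping in the orthonormality computation, where one must use \emph{completeness} of $\{\varphi_j\}$ (not merely orthonormality) to collapse the $j$-sum into $\scalprod{\psi_k,\psi_l}$.
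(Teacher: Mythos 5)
Your proposal is correct and follows essentially the same route as the paper's own proof: both exhibit the preconditioned entries $d_j\scalprod{\varphi_j,\psi_k}$ as a system of functions on $[N]$ that is orthonormal with respect to $\nu$ (via the resolution of the identity for $\{\varphi_j\}$, with the weight $\nu(j)$ cancelling $d_j^2$) and bounded by $K=\|\kappa\|_2$, then invoke Proposition~\ref{BOS}. Your added remarks on conjugation bookkeeping and the Parseval argument for $\|\kappa\|_2\geq 1$ are careful touches the paper leaves implicit, but they do not change the argument.
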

\begin{proof}
Note that as the matrix $\Psi$ with rows $\psi_k$ is unitary, the vectors $\eta_j:=\Psi \phi_j$, $j=1, \dots, N$, form an orthonormal system with respect to the uniform measure on $[N]$ as well. 
We show that the system $\{ \widetilde \eta_j \} = \{ d_{j} \eta_j \}$ is an orthonormal system with respect to $\nu$ in the sense of Definition~\ref{def:BOS}. Indeed,
{\footnotesize
\begin{align}
 \sum\limits_{j=1}^N \widetilde\eta_j(k_1) \widetilde\eta_j(k_2) \nu(j)  &= \sum\limits_{j=1}^N \Big( \frac{\|\kappa \|_2}{\kappa_j}\eta_j(k_1) \Big) \Big( \frac{\|\kappa \|_2}{\kappa_j}\eta_j(k_2) \Big) \frac{\kappa^2_j}{\|\kappa\|_2^2} \\&= \sum\limits_{j=1}^N \eta_j(k_1) \eta_j(k_2)  =\delta_{k_1,k_2};
\end{align}
}
hence the $\widetilde\eta_j$ form an orthonormal system with respect to $\nu$. Noting that $|\eta_j(k)|=|\langle\varphi_j, \psi_k\rangle|\leq \kappa_j$ and hence this system is bounded with bound $\|\kappa \|_2$, the result follows from Proposition~\ref{BOS}.
\end{proof}
\begin{remark}\label{rem:UB}
Note that the local coherence not only appears in the embedding dimension $m$, but also in the sampling measure. Hence a priori, one cannot guarantee the optimal embedding dimension if one only has suboptimal bounds for the local coherence. That is why the sampling measure in Theorem~\ref{thm:wBOS} is defined via the (known) upper bounds $\kappa$ and $\|\kappa\|_2$ rather than the (usually unknown) exact values $\mu_{loc}$ and $\|\mu_{loc}\|_2$, showing that suboptimal bounds still lead to meaningful bounds on the embedding dimension.
\end{remark}
\begin{remark}
 For $\mu\leq K N^{-1/2}$ (as in Corollary~\ref{thm:incoh}), one has $\| \mu^{loc} \|_2  \leq K$ , so Theorem~\ref{thm:wBOS} is a direct generalization of Corollary~\ref{thm:incoh}. As one has equality if and only if $\mu^{loc}$ is constant, however, Theorem~\ref{thm:wBOS} will be stronger in most cases.
\end{remark}

{\section{Local coherence estimates for frequencies and wavelets}\label{local}}
Due to the tensor product structure of both of these bases,  the two-dimensional local coherence of the two-dimensional Fourier basis with respect to bivariate Haar wavelets will follow by first bounding the local coherence of the one-dimensional Fourier basis with respect to the set of univariate building block functions of the bivariate Haar basis.

\begin{lemma}\label{lem:univariate}
Fix $N = 2^p$ with $p\in\N$.  For the space $\C^{N}$, the one-dimensional Fourier basis vectors $\varphi_k$, $k \neq 0$, and the one-dimensional Haar wavelet basis building blocks $h^e_{n,k}$, $e=0,1$, satisfy the incoherence relation

{\footnotesize
\begin{equation}
 | \langle \varphi_k, h^e_{n,\ell}\rangle | \leq  \min\Big( \frac{6\cdot 2^{\frac{n}{2}}}{|k|}, 3\pi 2^{-\frac{n}{2}} \Big).
\end{equation}
}
\end{lemma}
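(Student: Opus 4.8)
The plan is to compute the inner product $\langle \varphi_k, h^e_{n,\ell}\rangle$ explicitly as a (partial, possibly signed) geometric sum and then read off the two bounds in the minimum from elementary sine estimates. Write $L = 2^{p-n}$ for the length of the support of $h^e_{n,\ell}$ and recall that its nonzero height is $2^{(n-p)/2} = L^{-1/2}$. The scaling building block $h^0_{n,\ell}$ is constant on the block of $L$ consecutive indices $\{\ell L,\ldots,(\ell+1)L-1\}$, so
\[
\langle \varphi_k, h^0_{n,\ell}\rangle = \frac{1}{\sqrt{N}\sqrt{L}}\sum_{t=\ell L}^{(\ell+1)L-1} e^{-i2\pi t k/N}.
\]
Since $e^{-i2\pi tk/N}$ is $N$-periodic in $t$, the precise choice of index representatives (the Fourier convention uses $-N/2+1\le t\le N/2$, the wavelets a shifted block) is immaterial: one may sum over any contiguous block of $L$ residues modulo $N$, and the starting index only contributes a unimodular phase. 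The geometric sum then has modulus equal to the Dirichlet-kernel ratio $|\sin(\pi L k/N)|/|\sin(\pi k/N)| = |\sin(\pi k/2^{n})|/|\sin(\pi k/N)|$, so that, using $\sqrt{NL}=N2^{-n/2}$,
\[
|\langle \varphi_k, h^0_{n,\ell}\rangle| = \frac{2^{n/2}}{N}\,\frac{|\sin(\pi k/2^{n})|}{|\sin(\pi k/N)|}.
\]

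For the wavelet $h^1_{n,\ell}$ the two halves of the support carry opposite signs, so I would split the block into its two halves of length $L/2$ and factor out the common shorter geometric sum. The sign flip between halves produces an extra factor $1-e^{-i\pi k/2^{n}}$ of modulus $2|\sin(\pi k/2^{n+1})|$, while the remaining sum over $L/2$ terms contributes $|\sin(\pi k/2^{n+1})|/|\sin(\pi k/N)|$; multiplying, the two factors combine into a squared sine,
\[
|\langle \varphi_k, h^1_{n,\ell}\rangle| = \frac{2^{n/2}}{N}\,\frac{2\sin^2(\pi k/2^{n+1})}{|\sin(\pi k/N)|}.
\]
This squaring is the structurally important feature distinguishing the wavelet from the scaling function, and keeping track of it correctly is the one bookkeeping step that needs care.

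With these two identities in hand, both bounds follow from two elementary sine inequalities. In the denominator, since $0<|k|/N\le 1/2$ for $k\neq 0$, the concavity of $\sin$ on $[0,\pi]$ gives $|\sin(\pi k/N)|\ge 2|k|/N$. For the $|k|^{-1}$ bound I would bound the numerator trivially by $|\sin|\le 1$ (respectively $2\sin^2\le 2$), obtaining $|\langle \varphi_k, h^0_{n,\ell}\rangle|\le 2^{n/2}/(2|k|)$ and $|\langle \varphi_k, h^1_{n,\ell}\rangle|\le 2^{n/2}/|k|$, both comfortably below $6\cdot 2^{n/2}/|k|$. For the $2^{-n/2}$ bound I would instead use $|\sin\theta|\le|\theta|$ in the numerator: for $e=0$ this gives $|\sin(\pi k/2^{n})|\le \pi|k|/2^{n}$, hence $\tfrac{\pi}{2}2^{-n/2}$, while for $e=1$ the clean uniform estimate $\sin^2\theta\le\min(\theta^2,1)\le|\theta|$ with $\theta=\pi k/2^{n+1}$ gives $2\sin^2(\pi k/2^{n+1})\le \pi|k|/2^{n}$, again $\tfrac{\pi}{2}2^{-n/2}$; both lie below $3\pi\,2^{-n/2}$. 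Taking the minimum of the two bounds finishes the argument.

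The hard part is precisely this last step for the $2^{-n/2}$ bound: a naive $|\sin|\le 1$ in the numerator fails once $|k|$ is comparable to $2^{n}$, so one must use $|\sin\theta|\le|\theta|$ — equivalently the uniform inequality $\sin^2\theta\le|\theta|$ — which holds for every frequency $k$ and eliminates any need for a case distinction. Everything else is a routine Dirichlet-kernel computation.
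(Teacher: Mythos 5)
Your proof is correct and follows essentially the same route as the paper's: both evaluate $\langle\varphi_k,h^e_{n,\ell}\rangle$ exactly as a geometric sum and then bound the resulting sine factors (equivalently, the $|1-e^{i\theta}|$ factors) by elementary estimates. The only real difference is in the endgame: where the paper splits into the cases $|k|\le 2^{p-2}$ and $2^{p-2}<|k|\le 2^{p-1}$ to lower-bound the denominator $|1-e^{2\pi i 2^{-p}k}|$, you invoke the Jordan inequality $\sin\theta\ge\tfrac{2}{\pi}\theta$ on $[0,\pi/2]$ uniformly in $k$, which removes the case distinction and even yields sharper constants ($1$ and $\tfrac{\pi}{2}$ in place of $6$ and $3\pi$).
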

\begin{proof}
 We estimate
 
{\footnotesize
\begin{align}
 \langle \varphi_k, h^e_{n,\ell}\rangle =& \sum_{j=2^{p-n} \ell}^{2^{p-n}\ell+2^{p-n-1}-1}2^{\frac{n-p}{2}}2^{-\frac{p}{2}} e^{2\pi i 2^{-p}k j}\\ &+(-1)^e\sum_{j=2^{p-n} \ell+2^{p-n-1}}^{2^{p-n}\ell+2^{p-n}-1} 2^{\frac{n-p}{2}} 2^{-\frac{p}{2}} e^{2\pi i 2^{-p}k j}\\
=& e^{2\pi i 2^{-n} \ell k}\left(1+(-1)^e e^{2\pi i 2^{-n-1}k} \right)\nonumber\\ &\cdot 2^{\frac{n}{2}-p}\sum_{j=0}^{2^{p-n-1}-1} e^{2\pi i 2^{-p} k j} \label{eqn:summation}\\
=& e^{2\pi i 2^{-n} \ell k}\left(1+(-1)^e e^{2\pi i 2^{-n-1}k} \right)\nonumber\\ &\cdot  2^{\frac{n}{2}-p}\frac{1-e^{2\pi i 2^{-n-1}k}}{1 -e^{2\pi i 2^{-p}k}}. \label{eqn:geomser}
\end{align}
}

To estimate this expression, we note that
{\footnotesize
\begin{equation}
 |1-e^{2\pi i 2^{-n-1}k}|\leq \min(2, \pi  2^{-n}|k|)
 \label{eq:sincbound}
 \end{equation}
 }
 and distinguish two cases:

If $0\neq |k|\leq 2^{p-2}$, we bound $|1 - e^{2\pi i 2^{-p} k}|\geq 2^{-p} |k|$ and apply \eqref{eq:sincbound} to obtain

{\footnotesize
\begin{align}
 |\langle \varphi_k, h_{n,\ell}^e\rangle |&\leq 2\cdot 2^{\frac{n}{2}-p} \frac{ \min(2, \pi  2^{-n}|k|)}{2^{-p} |k|}\\
&\leq  \min(\frac{4\cdot 2^{\frac{n}{2}}}{|k|}, 2\pi 2^{-\frac{n}{2}}).
\end{align}
}
For $2^{p-2}<|k|\leq 2^{p-1}$, and hence $2^{-p} \leq \frac{1}{2}|k|^{-1}$, we note that $|1 - e^{2\pi i 2^{-p}k}|\geq \frac{\sqrt{2}}{2}$
and bound, again using \eqref{eq:sincbound},

{\footnotesize
\begin{align}
 |\langle \varphi_k, h^e_{n,\ell}\rangle |&\leq  2\cdot 2^{\frac{n}{2}} |k|^{-1} \frac{ \min(2, \pi  2^{-n}|k|)}{\frac{\sqrt{2}}{2}}\\
& \leq  \min \Big(\frac{6\cdot 2^{\frac{n}{2}}}{|k|}, 3\pi 2^{-\frac{n}{2}} \Big).
\end{align}
}
\end{proof}

This lemma enables us to derive the following incoherence estimates for the bivariate case.
\begin{theorem}
\label{Bincoherent}
Let $N=2^p$ for $\N\ni p\geq 8$. Then the local coherence $\mu^{loc}$ of the orthonormal two-dimensional Fourier basis $\{\varphi_{k_1,k_2}\}$ with respect to the orthonormal bivariate Haar wavelet basis $\{h^e_{n,\ell}\} $ in $\C^{N\times N}$, as defined in \eqref{defFour} and \eqref{bvhaar}, respectively, is bounded by

{\footnotesize
\begin{align}
 \mu^{loc}_{k_1, k_2}&\leq \kappa(k_1, k_2):= \min\left(
                    1,\frac{18\pi}{\max(|k_1|,|k_2|)} \right)\\
                   &\leq \kappa'(k_1,k_2):= \min\left(1,
                    \frac{18\pi\sqrt{2}}{\left(|k_1|^2+|k_2|^2\right)^{1/2}}\right),
\end{align}
}
and one has $\|\kappa \|_2\leq \|\kappa'\|_2\leq 52\sqrt{p} = 52\sqrt{\log_2(N)}$.
\end{theorem}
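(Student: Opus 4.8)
The plan is to reduce the two-dimensional estimate to the univariate bound of Lemma~\ref{lem:univariate} by exploiting that both the Fourier and the Haar bases are tensor products, then to control the resulting coherence function uniformly across wavelet scales, and finally to sum its square over the frequency grid. Throughout, the genuinely analytic work already sits in Lemma~\ref{lem:univariate}; what remains is separability, an elementary scale optimization, and a lattice-sum estimate.

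For the pointwise bound, I would first use separability. For any bivariate Haar function $h^e_{n,\ell}=h^{e_1}_{n,\ell_1}\otimes h^{e_2}_{n,\ell_2}$ (both factors carrying the common scale $n$) and any Fourier vector $\varphi_{k_1,k_2}=\varphi_{k_1}\otimes\varphi_{k_2}$, the inner product factorizes, $\scalprod{\varphi_{k_1,k_2},\,h^e_{n,\ell}}=\scalprod{\varphi_{k_1},\,h^{e_1}_{n,\ell_1}}\scalprod{\varphi_{k_2},\,h^{e_2}_{n,\ell_2}}$, while the single constant function $h^{(0,0)}$ is handled separately and contributes $1$ only at $(k_1,k_2)=(0,0)$ and $0$ elsewhere by Fourier orthogonality. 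Applying Lemma~\ref{lem:univariate} to each factor (the lemma covers both $e_i=0$ and $e_i=1$) and then bounding the product by using the growing branch $6\cdot 2^{n/2}/|k|$ in the coordinate carrying the \emph{larger} frequency and the decaying branch $3\pi 2^{-n/2}$ in the other, the powers $2^{\pm n/2}$ cancel and the product is at most $6\cdot 3\pi/\max(|k_1|,|k_2|)=18\pi/\max(|k_1|,|k_2|)$, a bound that is uniform in the scale $n$. Since this holds for every admissible $(n,\ell,e)$, the supremum defining $\mu^{loc}_{k_1,k_2}$ obeys it as well; combining with the trivial Cauchy--Schwarz estimate $|\scalprod{\varphi,h}|\le 1$ gives $\mu^{loc}_{k_1,k_2}\le\kappa(k_1,k_2)$. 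The second inequality $\kappa\le\kappa'$ is then immediate from $\max(|k_1|,|k_2|)\ge(k_1^2+k_2^2)^{1/2}/\sqrt2$ together with the monotonicity of $t\mapsto\min(1,t)$.

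It remains to estimate $\|\kappa'\|_2$. I would split the frequency grid into a low-frequency core, where $k_1^2+k_2^2\le(18\pi\sqrt2)^2$ and $\kappa'=1$, and its complement, where $\kappa'(k_1,k_2)^2=648\pi^2/(k_1^2+k_2^2)$. The core is a disk of fixed radius, independent of $N$, which fits inside the grid precisely because $p\ge 8$, and so contributes a controlled number of ones. For the complement I would organize the sum over Euclidean shells $\rho\le(k_1^2+k_2^2)^{1/2}<\rho+1$, whose cardinalities grow linearly in $\rho$, so that $\sum 648\pi^2/(k_1^2+k_2^2)$ is comparable to $\int \rho^{-1}\,d\rho$ and hence scales like $\log(N)$. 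Adding the two contributions gives $\|\kappa'\|_2^2\lesssim\log_2(N)=p$, which is the asserted $O(\sqrt{\log N})$ scaling. The main obstacle is exactly this last step: extracting a clean $\log N$ bound with an explicit constant requires faithfully comparing the lattice-point sum to the radial integral and verifying that the fixed-size core is absorbed into the bound, rather than any new conceptual difficulty. This bookkeeping, and not the coherence estimate itself, is where the numerical constant in the final inequality is pinned down.
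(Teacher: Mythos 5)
Your overall architecture mirrors the paper's proof: tensor factorization of the inner product, Lemma~\ref{lem:univariate} applied factorwise with the growing branch $6\cdot 2^{n/2}/|k|$ assigned to the coordinate with the larger frequency and the decaying branch $3\pi 2^{-n/2}$ to the other, Cauchy--Schwarz for the cap at $1$, and a core-disk-plus-radial-integral estimate for $\|\kappa'\|_2$. However, there is a genuine gap in your pointwise bound: you apply Lemma~\ref{lem:univariate} ``to each factor,'' but the lemma is stated only for $k\neq 0$, so your argument does not cover the frequencies on the coordinate axes, i.e.\ $(k_1,k_2)$ with exactly one of $k_1,k_2$ equal to zero. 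These are not negligible: $\langle\varphi_0, h^0_{n,\ell}\rangle = 2^{-n/2}\neq 0$, so inner products such as $\langle\varphi_{0,k_2}, h^{(0,1)}_{n,\ell}\rangle$ are genuinely nonzero, and there are about $2N$ axis frequencies. If on the axes you only had the trivial Cauchy--Schwarz bound $1$, then $\|\kappa\|_2^2$ would pick up a contribution of order $N$, destroying the claimed $O(\sqrt{\log_2 N})$ bound entirely. Your separate treatment of the constant wavelet $h^{(0,0)}$ does not address this, since that concerns the zero wavelet index, not zero frequency.

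The gap is easily closed, and the paper devotes a separate case to it: for $k_1 = 0$ one computes directly that $\langle\varphi_0, h^1_{n,\ell}\rangle = 0$ and $\langle\varphi_0, h^0_{n,\ell}\rangle = 2^{-n/2}$, so only $e = (0,1)$ survives, whence $|\langle\varphi_{0,k_2},h^{(0,1)}_{n,\ell}\rangle| \le 2^{-n/2}\cdot 6\cdot 2^{n/2}/|k_2| = 6/\max(|k_1|,|k_2|)$; in other words, the ``decaying branch'' estimate persists at zero frequency by direct computation rather than by the lemma, and your branch-selection recipe then goes through verbatim. A secondary point: your $\ell_2$ step as written yields $\|\kappa'\|_2 \lesssim \sqrt{p}$ but not the stated constant $52$; the paper pins it down by counting at most $20600$ lattice points in the core disk, bounding the tail by a radial integral of order $502\log_2 N$, and absorbing the core term using $p\ge 8$ to get $\|\kappa'\|_2^2 \le 2700\, p$ with $\sqrt{2700} < 52$. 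You correctly flagged this bookkeeping as remaining work, so the axis-frequency case is the one substantive omission.
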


\begin{proof}
First note that the bivariate Fourier coefficients decompose into the product of univariate Fourier coefficients:
 
 {\footnotesize
 \begin{align}
 \langle \varphi_{k_1, k_2}, h_{n,\ell}^e \rangle&= \langle \varphi_{k_1}, h^{e_1}_{n,\ell_1} \rangle\langle \varphi_{k_2}, h^{e_2}_{n,\ell_2}\rangle.
 %
 \end{align}
 }
For $k_i\neq 0$, the factors can be bounded using Lemma~\ref{lem:univariate}, which, for $k_1\neq 0 \neq k_2$, 
yields the bound

{\footnotesize
\begin{align*}
 |\langle \varphi_{k_1, k_2}, h_{n,\ell}^e \rangle| &\leq \min\Big(\frac{6\cdot 2^{\frac{n}{2}}}{|k_1|}, 3\pi 2^{-\frac{n}{2}} \Big)\min\Big(\frac{6\cdot 2^{\frac{n}{2}}}{|k_2|}, 3\pi 2^{-\frac{n}{2}} \Big)\\ &\leq \frac{18\pi}{\max(|k_1|,|k_2|)}.
\end{align*}
}

Next we consider the case where either $k_1=0$ or $k_2=0$; w.l.o.g., assume $k_1=0$. We use that in one dimension, we have   $\langle\varphi_0, h^1_{n,\ell}\rangle=0$ as well as $\langle\varphi_0, h^0_{n,\ell}\rangle=2^{-\frac{n}{2}}$. So we only need to consider the case that $e_1=0$ and hence $e_2=1$. Thus we obtain

{\footnotesize
\begin{equation*}
 |\langle \varphi_{0, k_2}, h_{n,\ell}^e \rangle| \leq 2^{-\frac{n}{2}} \frac{6\cdot 2^{\frac{n}{2}}}{| k_2|}= \frac{6}{\max(| k_1|,|k_2|)}.
\end{equation*}
}
In both cases, we obtain $ \mu^{loc}_{k_1, k_2}\leq \frac{18\pi}{\max(|k_1|,|k_2|)}$. The bound  $ \mu^{loc}_{k_1, k_2}\leq 1$ follows directly from the Cauchy-Schwarz inequality. We conclude  $ \mu^{loc}_{k_1, k_2}\leq \kappa(k_1, k_2)\leq \kappa'(k_1, k_2)$.

For the $\ell_2$-bound, we use an integral estimate,

{\footnotesize
\begin{align}
\|\kappa '\|_2^2
  \leq & \#\{(k_1,k_2): k_1^2+ k_2^2\leq 648 \pi^2\} \nonumber\\ &+ \sum_{\substack{{k_1,k_2=-2^{p-1}+1}
\\ |k_1|^2+|k_2|^2> 648\pi^2}}^{2^{p-1} } \frac{648\pi^2}{|k_1|^2+ |k_2|^2}\\
 \leq & 20600+ \iint\limits_{r=18\pi\sqrt{2}-1}^{2^{p-\frac{1}{2}}} 18\pi \sqrt{2} r^{-1} dr d\phi \\
 \leq & 17200+502\log_2(N) \leq 2700 \log_2(N) =2700 p,
 \end{align}
}
where we used that $p\geq 8$. Taking square root implies the result.
\end{proof}
\begin{remark}
 We believe that the factor of $\sqrt{\log_2 N}$ which appears in the proposition is due to lack of smoothness for the Haar wavelets. Hence we hope this factor can be removed by considering smoother wavelets.
\end{remark}

As the infimum of a strictly decreasing function and a strictly increasing function is bounded uniformly by its value at the intersection point of the two functions, Lemma \ref{lem:univariate} also gives frequency-dependent bounds for the  local coherence between frequencies and wavelets in the univariate setting.
\begin{corollary}
Fix $N = 2^p$ with $p\in\N$.  For the space $\C^{N}$, the one-dimensional Fourier basis vectors $\varphi_k$, $k \neq 0$, and the one-dimensional Haar wavelets  satisfy the incoherence relation

{\footnotesize
\begin{equation}
 | \langle \varphi_k, h_{n,\ell}\rangle | \leq  3\sqrt{2\pi}/ \sqrt{k}.
\end{equation}
}
\end{corollary}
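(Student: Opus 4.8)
The plan is to take the two-term bound from Lemma~\ref{lem:univariate} and optimize over the scale parameter $n$, exploiting that the two competing terms trade off monotonically. Concretely, Lemma~\ref{lem:univariate} gives, for $k \neq 0$,
\[
|\langle \varphi_k, h^e_{n,\ell}\rangle| \leq \min\Big(\frac{6\cdot 2^{n/2}}{|k|},\; 3\pi\, 2^{-n/2}\Big).
\]
Viewing the right-hand side as a function of the continuous variable $x := 2^{n/2} > 0$, the first argument $6x/|k|$ is strictly increasing while the second argument $3\pi/x$ is strictly decreasing. The minimum of a strictly increasing and a strictly decreasing function is largest exactly where the two curves cross, so evaluating at that crossing point produces a bound that is uniform in $n$ (and trivially in $\ell$ and $e$), as announced in the sentence preceding the corollary.

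First I would solve for the intersection by setting $6x/|k| = 3\pi/x$, which yields $x^2 = \pi|k|/2$, that is $x = \sqrt{\pi|k|/2}$. Substituting back into either branch gives the common value $3\pi/x = 3\sqrt{2\pi}/\sqrt{|k|}$, which is precisely the claimed bound $3\sqrt{2\pi}/\sqrt{k}$ for $k>0$. Since every integer scale $n$ actually occurring in the Haar system makes the minimum no larger than this continuous maximum, the inequality holds for all admissible $n$ and $\ell$.

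I do not anticipate any genuine obstacle: the statement is essentially a one-line envelope argument, and the only bookkeeping is to confirm that the value of the two branches at their intersection simplifies to $3\sqrt{2\pi}/\sqrt{|k|}$, and that relaxing the discrete parameter $n$ to a continuous one can only enlarge the quantity we are bounding above. One should also note that the corollary suppresses the superscript $e$, so the estimate is understood to hold uniformly for both $e=0$ and $e=1$, exactly as in Lemma~\ref{lem:univariate}.
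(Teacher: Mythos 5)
Your proof is correct and follows the same route the paper intends: the bound from Lemma~\ref{lem:univariate} is the minimum of a function increasing in $2^{n/2}$ and one decreasing in $2^{n/2}$, so it is maximized at their crossing point $2^{n/2}=\sqrt{\pi|k|/2}$, where both branches equal $3\sqrt{2\pi}/\sqrt{|k|}$. Your algebra at the intersection and the observation that restricting to discrete scales $n$ can only decrease the bound are both accurate, so nothing is missing.
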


{\section{Recovery guarantees}\label{proofs}}
In this section we present proofs of the main results.
\subsection{Proof of Theorem \ref{thm2}}
The proof of Theorem~\ref{thm2} concerning recovery from $\ell_1$-minimization in the bivariate Haar transform domain follows by combining the local incoherence estimate of Theorem~\ref{Bincoherent} with the local coherence based reconstruction guarantees of Theorem~\ref{wBOS:RIP:cond}.  Under the conditions of Theorem~\ref{wBOS:RIP:cond}, the stated recovery results follow directly from Theorem~\ref{Prop:RIP}. The weighted $\ell_2$-norm in the noise model is a consequence of the preconditioning. 

\subsection{Preliminary lemmas for the proof of Theorem \ref{thm1}}

The proof of Theorem~\ref{thm1} proceeds along similar lines to that of Theorem~\ref{thm2}, but we need a few more preliminary results relating the bivariate Haar transform to the gradient transform.  The first result, Proposition \ref{cdpx}, is derived from a more general statement involving the continuous bivariate Haar system and the bounded variation seminorm from \cite{cdpx}.
 \begin{proposition}
\label{cdpx}
Suppose $f \in \C^{N^2}$ has mean zero, and suppose its bivariate Haar transform $w = {\cal H} f \in \C^{N^2}$ is arranged such that $w_{(k)}$ is the $k$-th largest-magnitude coefficient.  Then there is a universal constant $C > 0$ such that for all $k \geq 1$, 

{\footnotesize
\begin{equation*}
| w_{(k)} | \leq C \frac{ \| f \|_{TV}}{k}
\end{equation*}
}
\end{proposition}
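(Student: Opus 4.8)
The plan is to transfer the statement to the continuous setting and then invoke the weak-$\ell_1$ (Lorentz-space) embedding of $BV$ into Haar-coefficient space established in \cite{cdpx}. First I would associate to the discrete image $f$ the piecewise-constant function $\tilde f$ on the unit square $[0,1]^2$ taking the value $f(t_1,t_2)$ on the pixel $Q_{t_1,t_2}=[(t_1-1)/N,\,t_1/N]\times[(t_2-1)/N,\,t_2/N]$. The reason for this identification is that both the Haar coefficients and the variation seminorm of $f$ have clean continuous counterparts: the discrete Haar functions are, up to a normalizing factor, the restrictions to the pixel grid of the continuous bivariate Haar wavelets at scales $0\le n\le p-1$, while the total variation of a piecewise-constant function is concentrated on the pixel edges.

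Second, I would verify the two matching identities. Since $\tilde f$ is constant on each pixel and every continuous Haar wavelet of scale $n\ge p$ has (dyadically aligned) support contained in a single pixel and has mean zero, all continuous Haar coefficients of $\tilde f$ at scales $n\ge p$ vanish; likewise the scaling-function coefficient vanishes because $f$ has mean zero. Hence the nonzero continuous Haar coefficients of $\tilde f$ are in bijection with the entries of $w=\mathcal{H}f$, and a short computation shows that each continuous coefficient equals $N^{-1}$ times the corresponding discrete coefficient $w^e_{n,\ell}=\langle f, h^e_{n,\ell}\rangle$. In parallel, the $BV$-seminorm of the piecewise-constant $\tilde f$ equals the sum over pixel edges of (edge length) times (jump), which is exactly $N^{-1}\|f\|_{TV}$. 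These two matching $N^{-1}$ factors are what render the final constant dimension-free.

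Third, with these identities in hand I would apply the continuous result of \cite{cdpx}: for a mean-zero $BV$ function on the square, the decreasing rearrangement of its Haar coefficients satisfies $|c_{(k)}|\le C\,\|\tilde f\|_{BV}/k$. Substituting the two identities, the common factor $N^{-1}$ cancels and yields $|w_{(k)}|\le C\,\|f\|_{TV}/k$, which is the claim. (The distinction between anisotropic and isotropic $BV$ costs only a fixed factor $\sqrt 2$ and is in any case immaterial for axis-aligned jumps.)

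The genuinely hard content — namely that $BV$ embeds into weak-$\ell_1$ rather than merely $\ell_1$, so that no spurious $\log N$ factor appears — is precisely the theorem imported from \cite{cdpx}. A naive scale-by-scale argument (bounding the sum of coefficient magnitudes at each scale by $\|f\|_{TV}$ via a Poincar\'e inequality on each wavelet support, and then summing over the $p=\log_2 N$ scales) would only give the weaker bound $\|f\|_{TV}\log(N)/k$, which is why the finer Lorentz estimate is essential. Thus the main obstacle is not any single inequality but making the discrete-to-continuous dictionary airtight: confirming the vanishing of fine-scale and scaling-function coefficients, tracking the normalization so that the $N^{-1}$ factors cancel cleanly, and ensuring that the cited continuous statement is formulated for the non-smooth, one-vanishing-moment Haar system on a bounded square rather than for smooth wavelets on $\R^2$.
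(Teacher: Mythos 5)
Your proposal is correct and takes essentially the same route as the paper: the paper's proof is a citation to \cite{nw1}, where Proposition~\ref{cdpx} is derived from Theorem~8.1 of \cite{cdpx} by precisely this discrete-to-continuous dictionary --- piecewise-constant extension to $[0,1]^2$, vanishing of the fine-scale and scaling coefficients, matching of discrete and continuous Haar coefficients and of $\|f\|_{TV}$ with the $BV$ seminorm up to the cancelling $N^{-1}$ factors, then invoking the weak-$\ell_1$ embedding. The genuinely hard content is imported from \cite{cdpx} in both treatments, so nothing is missing.
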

\noindent See \cite{nw1} for a derivation of Proposition \ref{cdpx} from Theorem $8.1$ of \cite{cdpx}.  

We will also need two lemmas about the bivariate Haar system.
\begin{lemma}
\label{edge}
Let $N = 2^p$.  For any indices $(t_1, t_2)$ and $(t_1, t_2 +1),$ there are at most $6p$ bivariate Haar wavelets $h_{n,\ell}^e$ satisfying $| h_{n,\ell}^e(t_1, t_2+1) - h_{n,\ell}^e(t_1, t_2)| > 0$.
\end{lemma}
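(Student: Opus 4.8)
We want to count the bivariate Haar wavelets $h_{n,\ell}^e$ whose values at the horizontally adjacent pixels $(t_1,t_2)$ and $(t_1,t_2+1)$ differ. Let me think about this carefully.
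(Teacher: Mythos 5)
Your submission contains no proof: after announcing the plan to ``count the bivariate Haar wavelets whose values at the two adjacent pixels differ,'' it stops. Restating the goal is not an argument, so the entire content of the lemma is missing. In particular, nothing in what you wrote explains where the factor $p$ comes from, nor where the constant $6$ comes from, and these are precisely the two things that must be justified.

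The key idea you would need is a scale-by-scale count. There are $p$ dyadic scales $n$ with $0 < n \leq p$, and at each fixed scale the wavelets $h^e_{n,\ell}$ are supported on dyadic squares of side length $2^{p-n}$, with three wavelets ($e \in \{\{0,1\},\{1,0\},\{1,1\}\}$) per square. A wavelet can take different values at $(t_1,t_2)$ and $(t_1,t_2+1)$ only if its supporting square meets one of these two pixels, and the two pixels lie in at most two dyadic squares at any given scale. This bounds the count at each scale by $2 \cdot 3 = 6$: in the paper's argument, if the edge between the pixels is a dyadic edge at scale $n$, the three wavelets on each of the two bordering squares may jump across it (giving $6$); if instead both pixels lie in the same square at scale $n$, only wavelets whose internal sign-change line separates them can differ (at most $2$ of the $3$). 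Summing the bound $6$ over the $p$ scales gives $6p$. Without some version of this decomposition by scale and the support/sign-change case analysis, the claimed bound cannot be reached.
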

\begin{proof}
The lemma follows by showing that for fixed dyadic scale $n$ in $0 < n \leq p$, there are at most 6 Haar wavelets with edge length $2^{p-n}$ satisfying $| h_{n,\ell}^e(t_1, t_2+1) - h_{n,\ell}^e(t_1, t_2)| > 0$. If the edge between $(t_1, t_2)$ and $(t_1, t_2+1)$ coincides with a dyadic edge at scale $n$, then the 3 wavelets supported on each of the bordering dyadic squares transition from being zero to nonzero along this edge.  On the other hand, if $(t_1, t_2)$ coincides with a dyadic edge at dyadic scale $n+1$ but does not coincide with a dyadic edge at scale $n$, then 2 of the 3 wavelets supported on the dyadic square having $(t_1, t_2+1), (t_1, t_2)$ as a center edge can satisfy the stated bound.      
\end{proof}

\begin{lemma}
\label{l1haar}

{\footnotesize
\begin{equation*} 
\| \nabla h_{n,\ell}^e \|_1 \leq 8 \quad \quad \forall n,\ell,e.
\end{equation*}
}
\end{lemma}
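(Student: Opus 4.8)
The plan is to exploit the tensor-product structure of the bivariate Haar functions so that the total variation factors into univariate quantities. Write $h_{n,\ell}^e = u \otimes v$, meaning $h_{n,\ell}^e(t_1,t_2) = u(t_1)\, v(t_2)$ with $u = h_{n,\ell_1}^{e_1}$ and $v = h_{n,\ell_2}^{e_2}$ the univariate building blocks from Definition~\ref{bvhaar}. Since the horizontal difference acts only on the first variable,
\[
(h_{n,\ell}^e)_x(t_1,t_2) = \big(u(t_1+1)-u(t_1)\big)\, v(t_2),
\]
and symmetrically for $(h_{n,\ell}^e)_y$. Summing absolute values over all pixels and separating the two sums gives the factorization
\[
\| \nabla h_{n,\ell}^e \|_1 = \|\Delta u\|_1 \, \|v\|_1 + \|u\|_1 \, \|\Delta v\|_1,
\]
where $\Delta u(t) := u(t+1)-u(t)$ denotes the univariate forward difference and all norms on the right are univariate $\ell_1$-norms.

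Next I would record the two univariate quantities for each building block. Both $h^0$ and $h^1$ (at scale $n$) are supported on a dyadic interval of length $2^{p-n}$ and take values of modulus $2^{(n-p)/2}$, so $\|u\|_1 = 2^{p-n}\cdot 2^{(n-p)/2} = 2^{(p-n)/2}$ in either case. For the differences, the window function $h^0_{n,\ell}$ is constant on its support, so $\Delta u$ is nonzero only at the two endpoints and $\|\Delta h^0_{n,\ell}\|_1 \le 2\cdot 2^{(n-p)/2}$; the wavelet $h^1_{n,\ell}$ has one additional interior sign change, contributing a jump of size $2\cdot 2^{(n-p)/2}$ on top of the two endpoint jumps, whence $\|\Delta h^1_{n,\ell}\|_1 \le 4\cdot 2^{(n-p)/2}$.

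Substituting these into the factorization, each factor $2^{(p-n)/2}$ cancels exactly against a factor $2^{(n-p)/2}$, leaving a bound independent of $n$, $\ell$, and $p$. For $e=(1,1)$ one obtains $4 + 4 = 8$; for $e=(1,0)$ and $e=(0,1)$ one obtains $4 + 2 = 6$. Taking the maximum over the three admissible values of $e$ yields $\|\nabla h_{n,\ell}^e\|_1 \le 8$, as claimed.

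I expect no serious obstacle; the only point requiring care is the boundary. The directional derivatives in \eqref{Xx} are non-periodic forward differences, extended to the full $N\times N$ grid by zero-padding, so when the support of a building block abuts the image boundary the corresponding endpoint jump is simply not counted. This can only decrease $\|\Delta u\|_1$, hence the upper bounds above — attained when the support lies in the interior — remain valid in every case, and the constant $8$ is in fact sharp, being achieved by the $e=(1,1)$ wavelets.
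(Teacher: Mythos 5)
Your proof is correct, and it takes a structurally different route from the paper's. The paper argues directly in two dimensions: $h_{n,\ell}^e$ is piecewise constant with modulus $2^{n-p}$ on a dyadic square of side $2^{p-n}$, so one just counts jumps geometrically --- a jump of $2^{n-p}$ along each of the four boundary edges of the square and a jump of $2\cdot 2^{n-p}$ along the (at most two) interior sign-change edges, giving $\|\nabla h_{n,\ell}^e\|_1 \leq 8\cdot 2^{p-n}\cdot 2^{n-p} = 8$. You instead factor through the tensor structure, using the exact identity $\|\nabla(u\otimes v)\|_1 = \|\Delta u\|_1\|v\|_1 + \|u\|_1\|\Delta v\|_1$ and reducing everything to univariate jump counts. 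The two arguments encode the same cancellation ($2^{p-n}$ pixels per edge times $2^{n-p}$ per jump), but your factorization buys a bit more: it cleanly separates the cases $e=(1,1)$ versus $e=(1,0),(0,1)$, yielding the sharper constant $6$ for the mixed wavelets and showing $8$ is attained exactly by interior $e=(1,1)$ wavelets, and it generalizes immediately to $d$-dimensional tensor-product Haar systems (relevant to the extension to higher dimensions mentioned in the paper's concluding section). The paper's count is shorter and more visual but bounds all three cases by the worst one. Your handling of the boundary --- noting that the non-periodic, zero-extended differences of \eqref{Xx} can only drop endpoint jumps and hence only decrease the norm --- is also correct and is a point the paper passes over silently.
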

\begin{proof}
$h_{n,\ell}^e$ is supported on a dyadic square of side-length $2^{p-n}$, and on its support, its absolute value is constant, $| h_{n,\ell}^e | = 2^{n-p}$. Thus at the four boundary edges of the square, there is a jump of $2^{n-p}$, at the (at most two) dyadic edges in the middle of the square where the sign changes there is a jump of $2\cdot 2^{n-p}$. Hence $\| \nabla h_{n,\ell}^e \|_1 \leq \| \nabla h_{n,\ell}^{\{1,1\}} \|_1 \leq 8 \cdot 2^{p-n} \cdot 2^{n-p} = 8$.
\end{proof}

\vspace{3mm}

\noindent We are now ready to prove Theorem \ref{thm1}.  

\subsection{Proof of Theorem \ref{thm1}}

Recall that ${\cal H}: \C^{N^2} \rightarrow \C^{N^2}$ denotes the bivariate Haar transformation $f \mapsto \big( \scalprod{f,h^e_{n,\ell}} \big)_{n,\ell,e}.$ Let $w^f_{(1)}$ be the Haar coefficient corresponding to the constant wavelet, and let $w^f_{(j)}$, for $j \geq 2$, denote the $j-1$-st largest-magnitude Haar coefficient among the remaining, and let $h_{(j)}$ denote the associated Haar wavelet.  We use this slightly modified ordering because Proposition \ref{cdpx} applies only to mean-zero images.

Let $D\in \C^{N^2\times N^2}$ be the diagonal matrix encoding the weights in the noise model, i.e., $D = \operatorname{diag}(\rho)$, where, for $\kappa'$ as in Theorem~\ref{Bincoherent}, $\rho({k_1, k_2}) = \| \kappa' \|_2 / \kappa'(k_1, k_2) = C \sqrt{\log_2(N)}  \max \Big(1, (|k_1|^2 + |k_2|^2 )^{1/2}/18\pi \Big)$. Note that $D g \equiv \rho\circ g$.

 By Theorem~\ref{thm:wBOS} combined with the bivariate incoherence estimates from Theorem \ref{Bincoherent}, we know that with high probability ${\cal A} := \frac{1}{\sqrt{m}} D{\cal F}_{\Omega} {\cal H}^{*}$ has the restricted isometry property of order $s$ and level $\delta$ once

 {\footnotesize
 \begin{equation*}
  m \gtrsim  s\delta^{-2}\log^3(s) \log^2{N}.
\end{equation*}
}
\noindent Thus, for the stated number of measurements $m$ with an appropriate hidden constant, we can assume that ${\cal A}$ has the restricted isometry property of order 

{\footnotesize
\begin{equation*}
\overline{s} = 24 {\widetilde C}^2 s \log^3(N), 
\end{equation*}
}
where the exact value of the constant $\widetilde C$ will be determined below.
In the remainder of the proof we show that this event implies the result. 

\vspace{3mm}

\noindent Let $ u = f - f^{\#}$ denote the residual error of \eqref{TV}.  Then we have
\begin{itemize}
\item {\bf Cone Constraint on $\nabla  u$.} \quad  Let $S$ denote the support of the best $s$-sparse approximation to $\nabla f$.  Since $f^{\#} = f - u$ is the minimizer of (TV) and $f$ is also a feasible solution,

{\footnotesize
\begin{align*}
\| (\nabla f)_S \|_1 &- \| (\nabla  u)_S\|_1 - \| (\nabla f)_{S^c}\|_1 + \| (\nabla  u)_{S^c}\|_1 \\
&\leq \| (\nabla f)_S - (\nabla  u)_S\|_1 + \| (\nabla f)_{S^c} - (\nabla  u)_{S^c}\|_1\\
&= \| \nabla f^{\#} \|_1\\
&\leq \| \nabla f \|_1\\
&= \| (\nabla f)_S\|_1 + \| (\nabla f)_{S^c}\|_1
\end{align*}
}
Rearranging yields the cone constraint

{\footnotesize
\begin{equation}
\label{coned}
\| (\nabla u)_{S^c} \|_1 \leq \| (\nabla u)_S \|_1 + 2\| \nabla f - (\nabla f)_S \|_1.
\end{equation}
}
\item {\bf Cone Constraint on $w^u = {\cal H} u$.} \quad
\noindent Proposition \ref{cdpx} allows us to pass from a cone constraint on the gradient to a cone constraint on the Haar transform.  More specifically, we obtain

{\footnotesize
\begin{equation*}
 | w^u_{(j+1)} | \leq C \frac{ \| \nabla  u \|_1}{j},
\end{equation*}
}
recalling that $w^u_{(1)}$ is the coefficient associated to the constant wavelet.
Now consider the set $\widetilde S$ consisting of the $s$ edges indexed by $S$.  By Lemma \ref{edge},  the set $\Lambda$  indexing those wavelets which change sign across edges in $\widetilde S$ has cardinality at most $| \Lambda | = 6s \log(N)$.  Decompose $ u$ as

{\footnotesize
\begin{equation}
\label{d_decomp}
 u = \sum_{j} w^u_{(j)} h_{(j)} =  \sum_{j \in \Lambda} w^u_{(j)} h_{(j)} + \sum_{j \in \Lambda^c} w^u_{(j)} h_{(j)} =:  u_{\Lambda} +  u_{\Lambda^c}
\end{equation}
}
and note that by linearity of the gradient, 

{\footnotesize
\begin{equation*}
 \nabla  u = \nabla  u_{\Lambda} + \nabla  u_{\Lambda^c}.   
\end{equation*}
}
Now, by construction of the set $\Lambda$, we have that $(  \nabla  u_{\Lambda^c} )_S = 0$ and so $  (\nabla  u)_S  = (\nabla  u_{\Lambda} )_S$.
By Lemma \ref{l1haar} and the triangle inequality, 

{\footnotesize
\begin{align*}
\|( \nabla  u )_S \|_1 &= \| (\nabla  u_{\Lambda} )_S \|_1 \leq \|  \nabla { u_{\Lambda}} \|_1  \\
&\leq \sum_{j \in \Lambda} | w_{(j)} |  \|\nabla h_{(j)} \|_1 \\
&\leq 8 \sum_{j \in \Lambda} | w_{(j)} |.
\end{align*}
}
\noindent Combined with Proposition \ref{cdpx} concerning the decay of the wavelet coefficients and the cone constraint \eqref{coned}, and letting 

{\footnotesize
\begin{equation*}
\widetilde{s} = 6s \log(N) = | \Lambda |,
\end{equation*}
}
\noindent this gives rise to a cone constraint on the wavelet coefficients:

 {\footnotesize
\begin{align*}
\sum_{j = \widetilde{s} + 1}^{N^2} &| w^u_{(j)} | \leq \sum_{j = s+1}^{N^2} | w^u_{(j)} | \nonumber \\
 &\leq C \log(N^2/s)  \| \nabla  u \|_{1}\nonumber \\
&= C \log(N^2/s) \Big( \| (\nabla  u)_S \|_1 +  \| (\nabla  u)_{S^c} \|_1  \Big)\nonumber \\ \nonumber \\
&\leq C \log(N^2/s)\Big( \|2 (\nabla  u)_S \|_1 + 2\| \nabla f - (\nabla f)_S \|_1\Big) \nonumber \\ \nonumber \\
&\leq C \log(N^2/s)\Big(  16 \sum_{j \in \Lambda} | w_{(j)} | + 2\| \nabla f - (\nabla f)_S \|_1\Big)   \nonumber \\ \nonumber \\
&\leq \widetilde C  \log(N^2/s)\Big( \sum_{j =1}^{\widetilde{s} } | w_{(j)} | + \| \nabla f - (\nabla f)_S \|_1\Big)   
\end{align*}
}

\item {\bf Tube constraint,  $\| {\cal A} {\cal H} u \|_2 \leq \sqrt{2} \varepsilon.$} \\
 By assumption, ${\cal A} =\frac{1}{\sqrt{m}}D{\cal F}_{\Omega} {\cal H}^{*}: \C^{N^2} \rightarrow \C^m$ has the RIP of order $\overline s>s$. 
Also by assumption, $\| D {\cal F}_{\Omega} f - D y \|_{2}= \| \rho \circ ({\cal F}_{\Omega} f -  y)\|_{2}\leq  \sqrt{m} \varepsilon$, so $f$ is a feasible solution to \eqref{TV}.

\vspace{1mm}

 Since both $f$ and $f^{\#}$ are in the feasible region of \eqref{TV}, we have for $ u = f - f^{\#}$,
{\footnotesize
\begin{align*}
m \| {\cal A} {\cal H}  u \|^2_2 &= \| D{\cal F}_{\Omega}{{\cal H}}^{*} {\cal H}  u \|^2_2 = \| D{\cal F}_{\Omega} u \|_2^2 \nonumber \\
&\leq \| D{\cal F}_{\Omega}f - Dy \|_2^2 + \| D{\cal F}_{\Omega}f^{\#} - D y \|_2^2 \nonumber \\
&\leq 2m \varepsilon^2.
\end{align*}
}

\item Using the derived cone and tube constraints on ${\cal H} u$ along with the assumed RIP bound on ${\cal A}$, the proof is complete by applying Proposition~\ref{cone-tube} using $\gamma = \widetilde C\log(N^2/s)\leq 2 \widetilde C\log(N)$, $k = 6s\log N$, and $\xi = \widetilde C\log(N^2/s) \| \nabla f - (\nabla f )_S \|_1$. In fact, this  is where we need that the RIP order is $\overline s$, to accommodate for the factors $\gamma$ and $k$. \qed

\end{itemize}

\section{Numerical illustrations}\label{sec:numerics}
In this section, we will provide numerical examples for our results. As there have been papers entirely devoted to the empirical investigation of optimal sampling strategies \cite{WA10}, the goal will be to illustrate our results rather than provide a thorough empirical analysis.

First, we consider a $256\times 256$ spine image and visually compare the reconstruction quality for different spine images. While the inferior reconstruction quality for uniform sampling is obvious, the difference between variable density sampling and using only the low frequencies is less apparent, both visually and in the reconstruction error. 


\begin{figure}[h!]
\begin{center}
\mbox{
\subfigure[Original image]{
 \includegraphics[height=1.8cm,width=1.8cm]{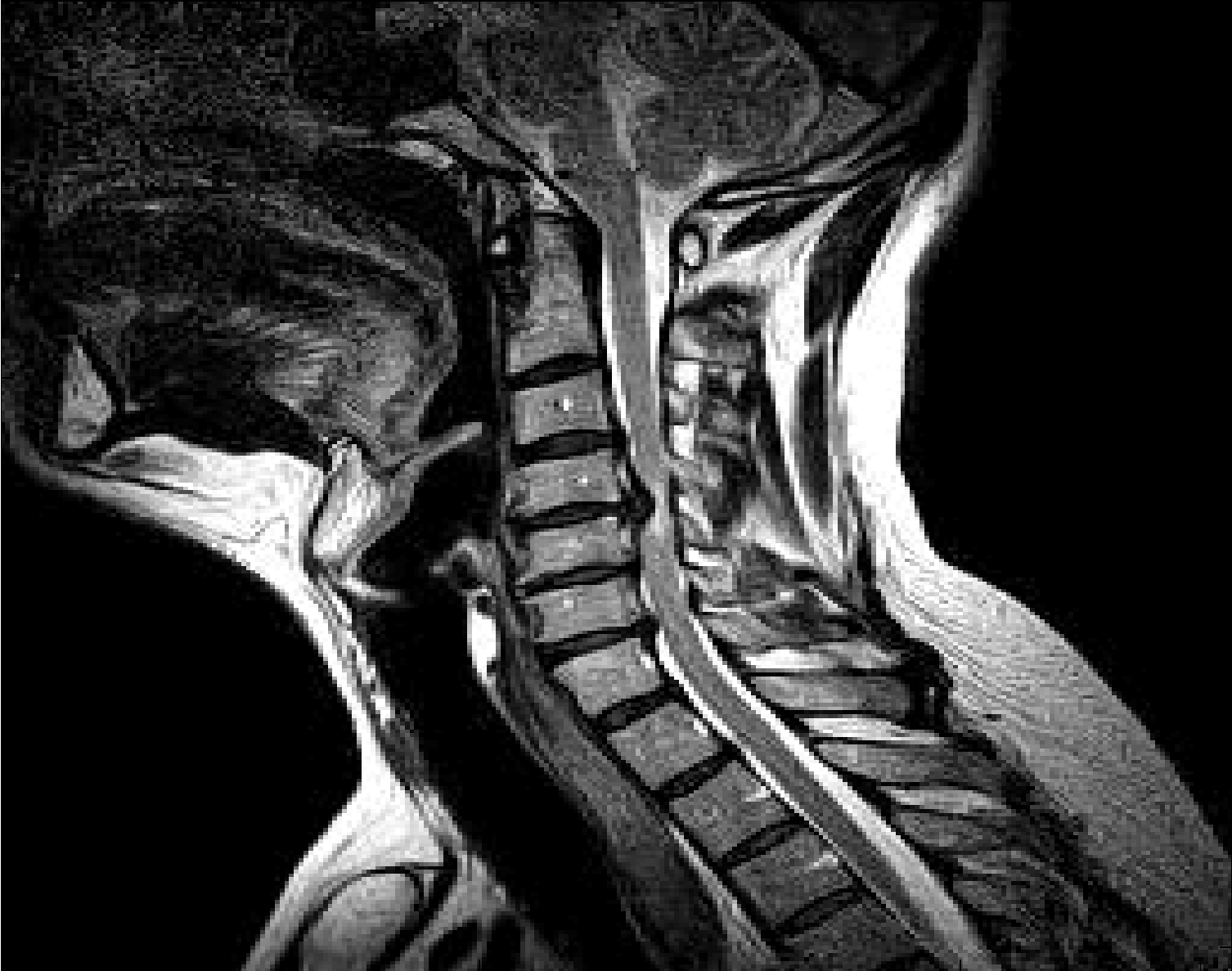}
 \includegraphics[height=1.8cm, width=1.8cm]{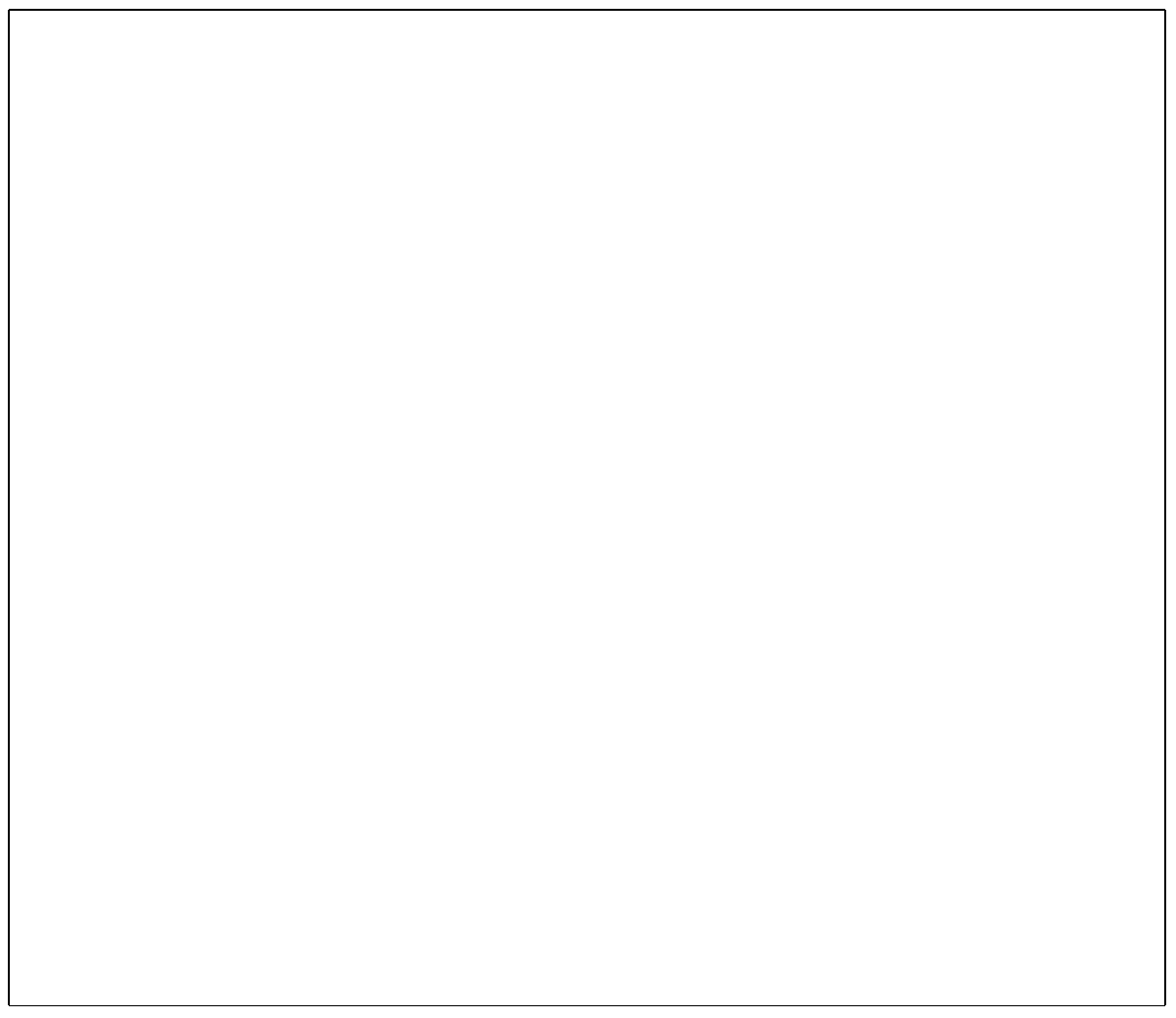}
}
\subfigure[Lowest frequencies only]{
\includegraphics[height=1.8cm,width=1.8cm]{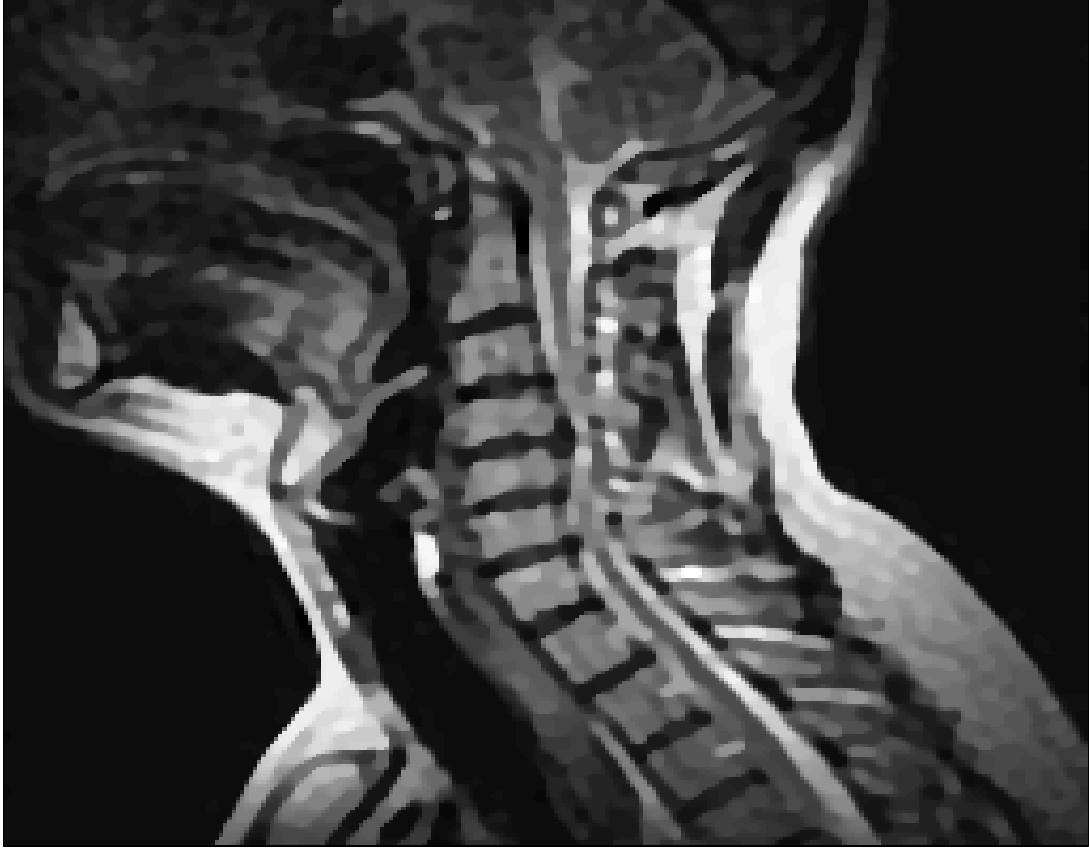} \label{1:z}
\includegraphics[height=1.8cm,width=1.8cm]{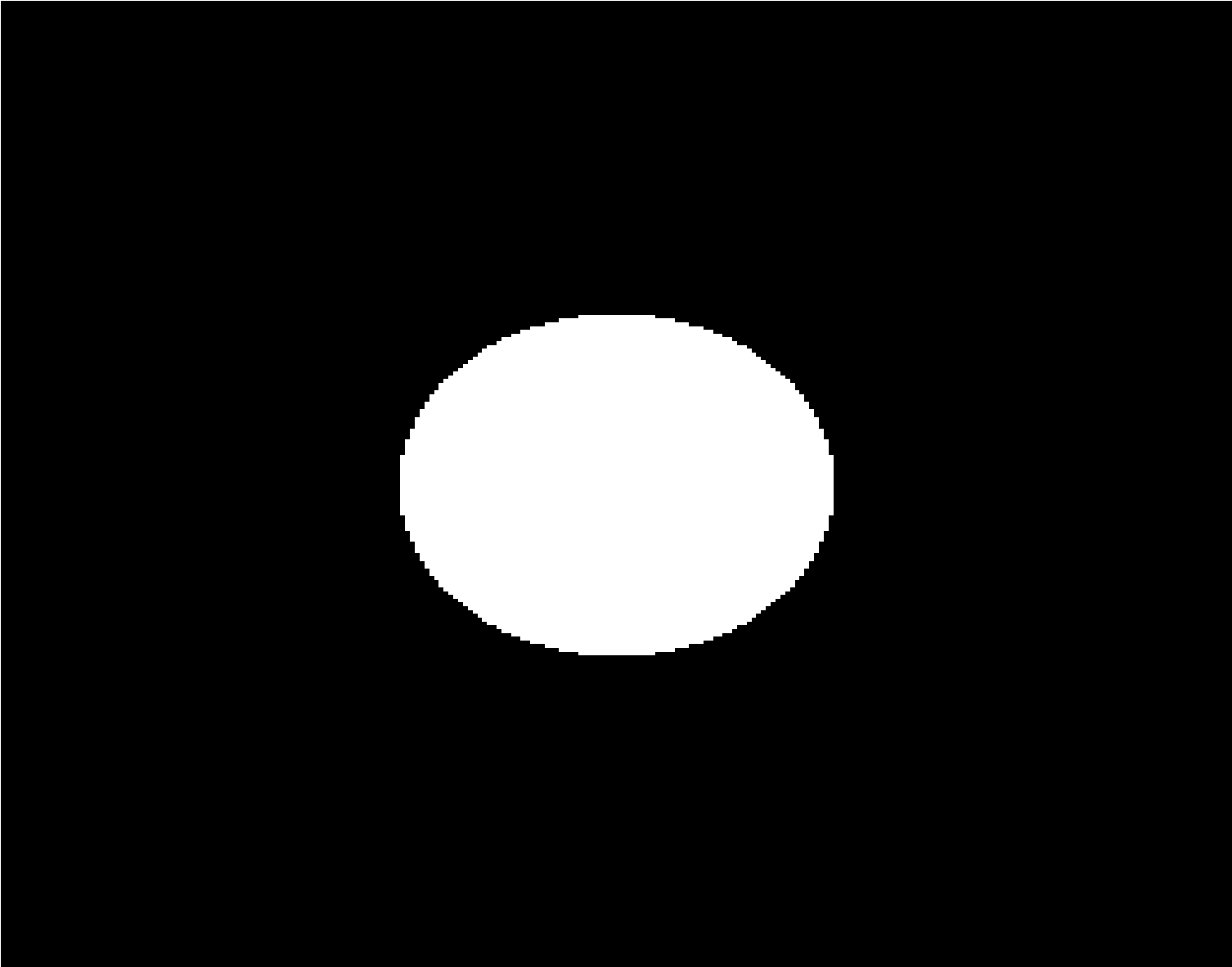} \label{2:z}
}
}
\mbox{
\subfigure[Uniform subsampling]{
\includegraphics[height=1.8cm,width=1.8cm]{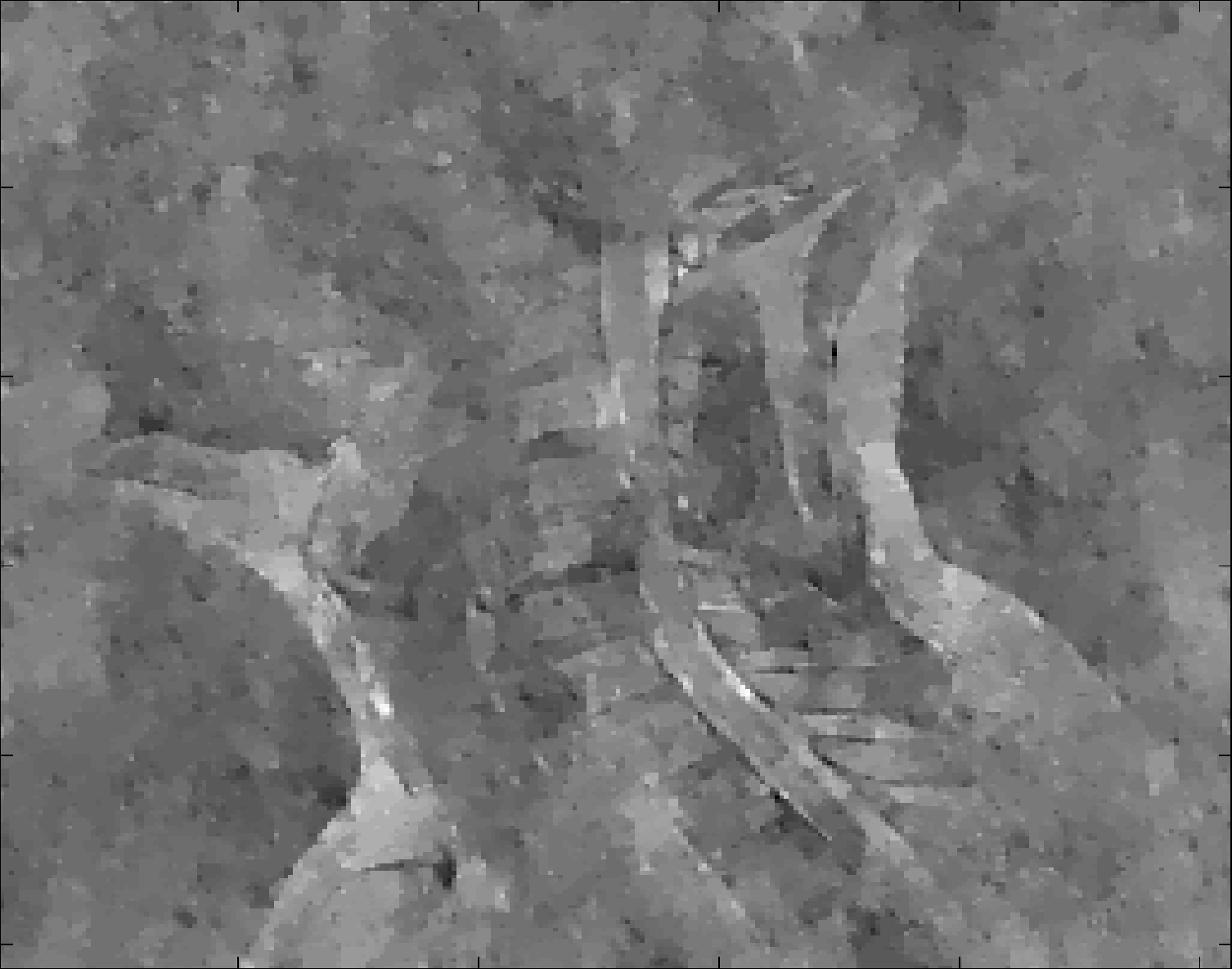} \label{2:a}
\includegraphics[height=1.8cm,width=1.8cm]{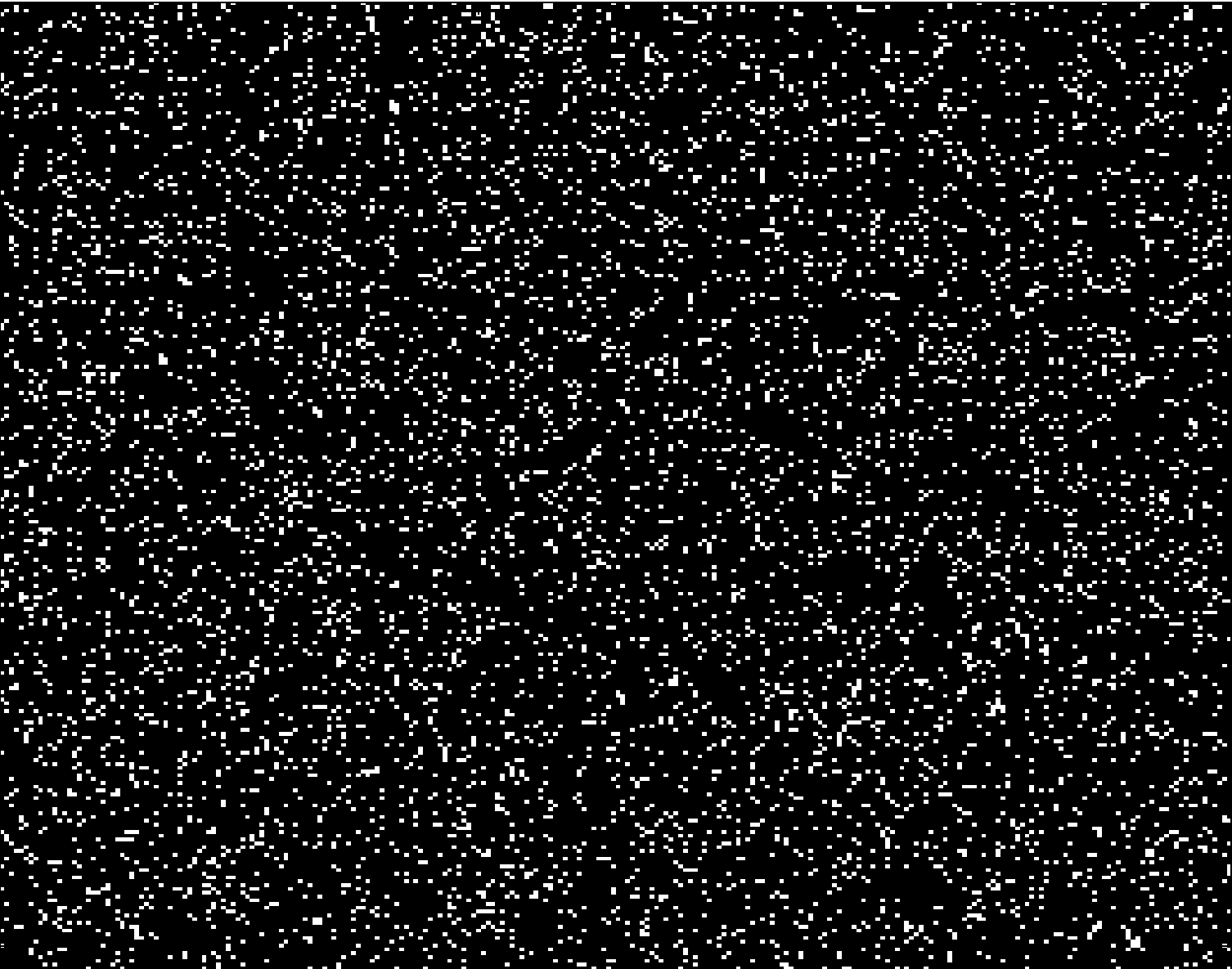} \label{1:a}
}
\subfigure[Equispaced radial lines]{
\includegraphics[height=1.8cm,width=1.8cm]{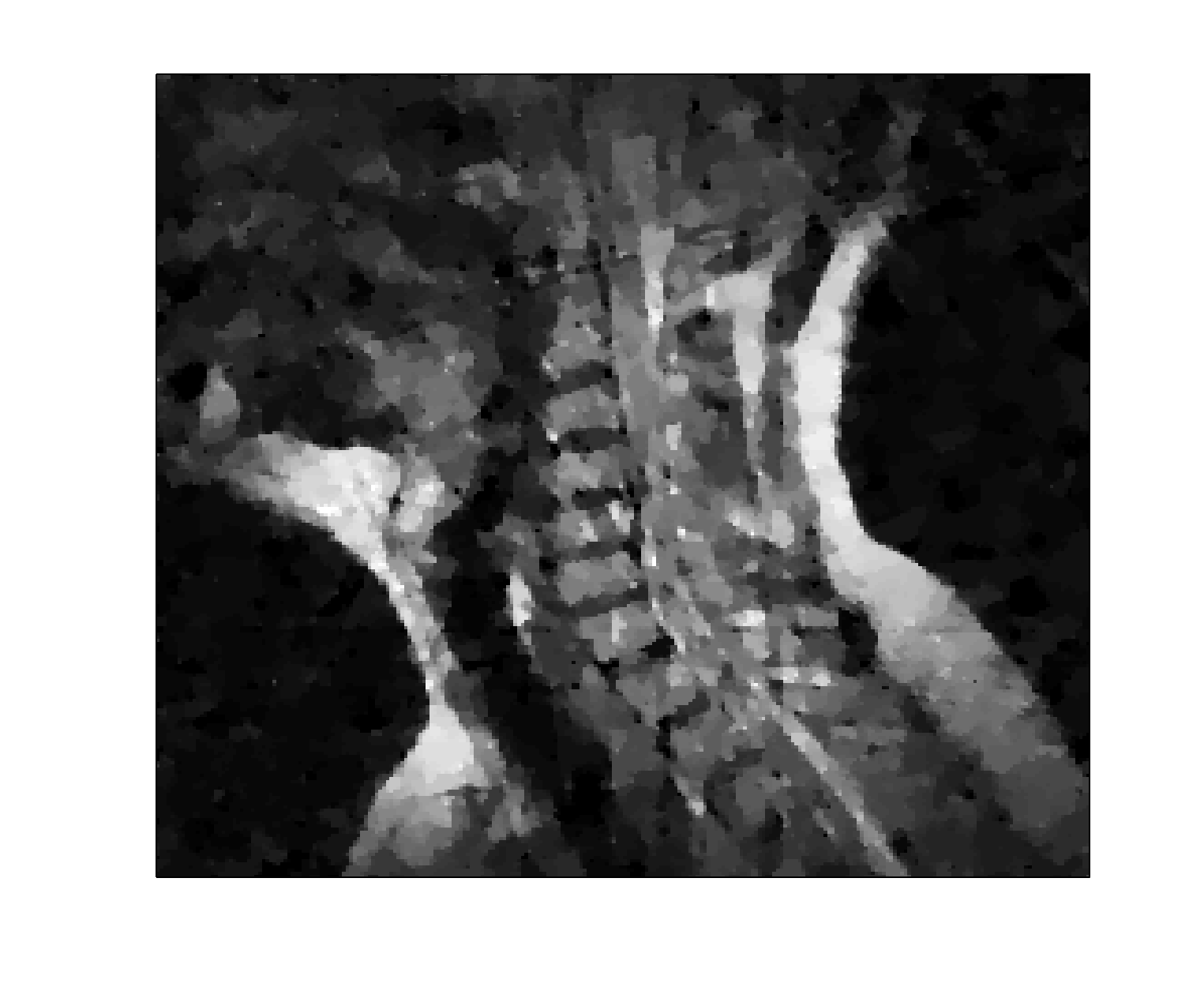} \label{2:b}
\includegraphics[height=1.8cm,width=1.8cm]{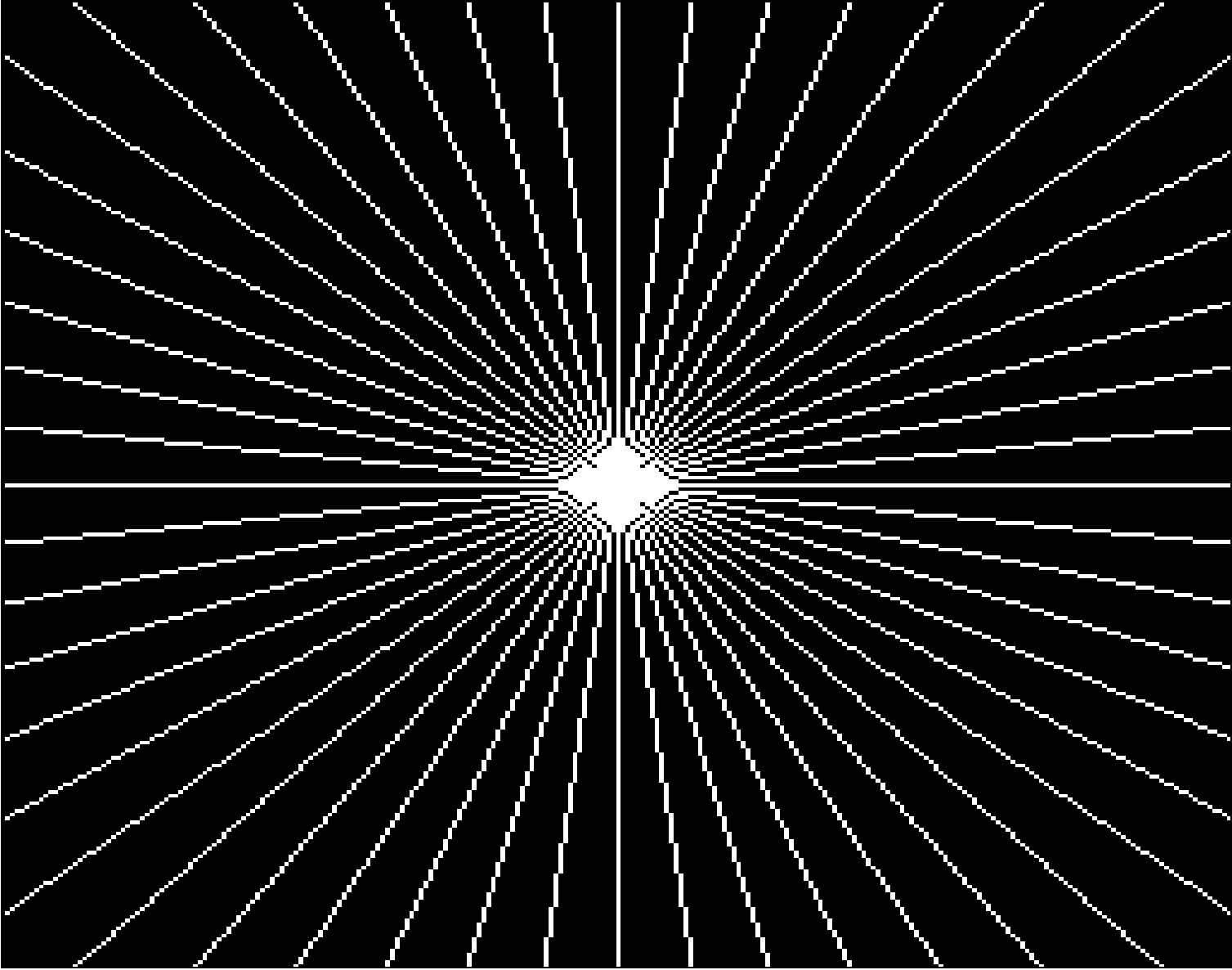} \label{1:b}
}
}
\mbox{

\subfigure[\quad Sample $ \propto (k_1^2 + k_2^2)^{-1/2}$]{
\includegraphics[height=1.8cm,width=1.8cm]{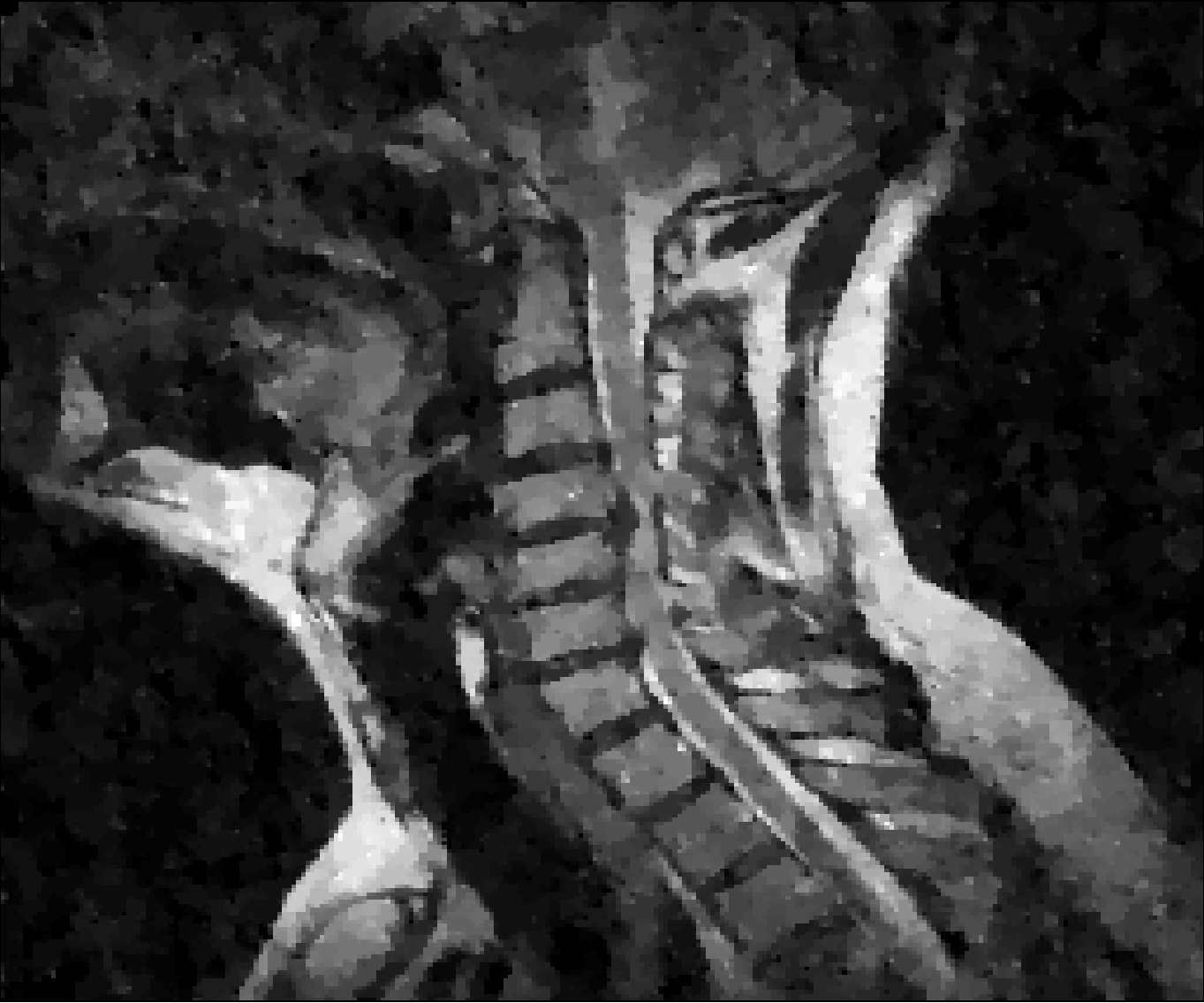} \label{2:c}
\includegraphics[height=1.8cm,width=1.8cm]{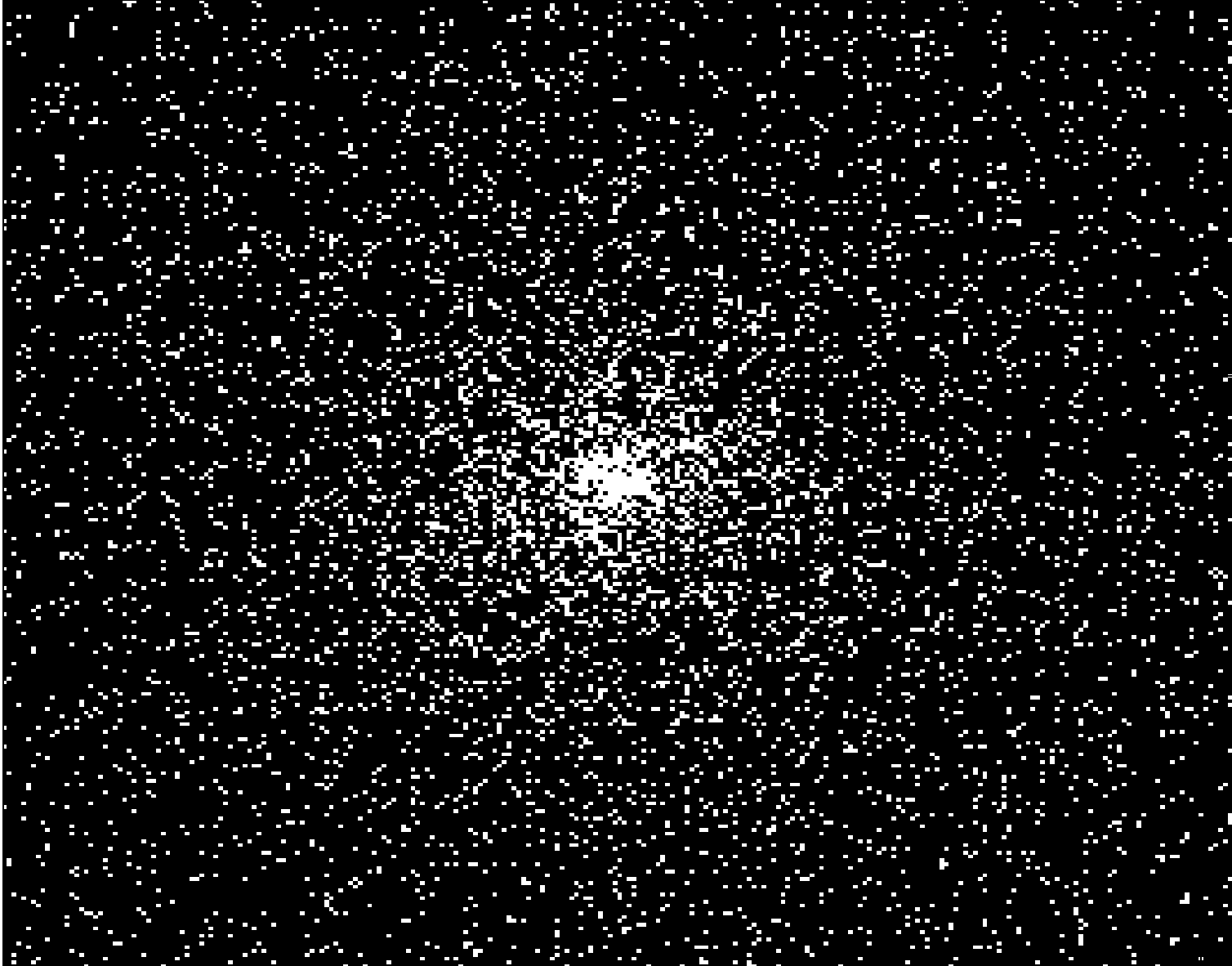} \label{1:c}
}
\subfigure[Sample $\propto \max(|k_1|, |k_2| )^{-1}$]{
\includegraphics[height=1.8cm,width=1.8cm]{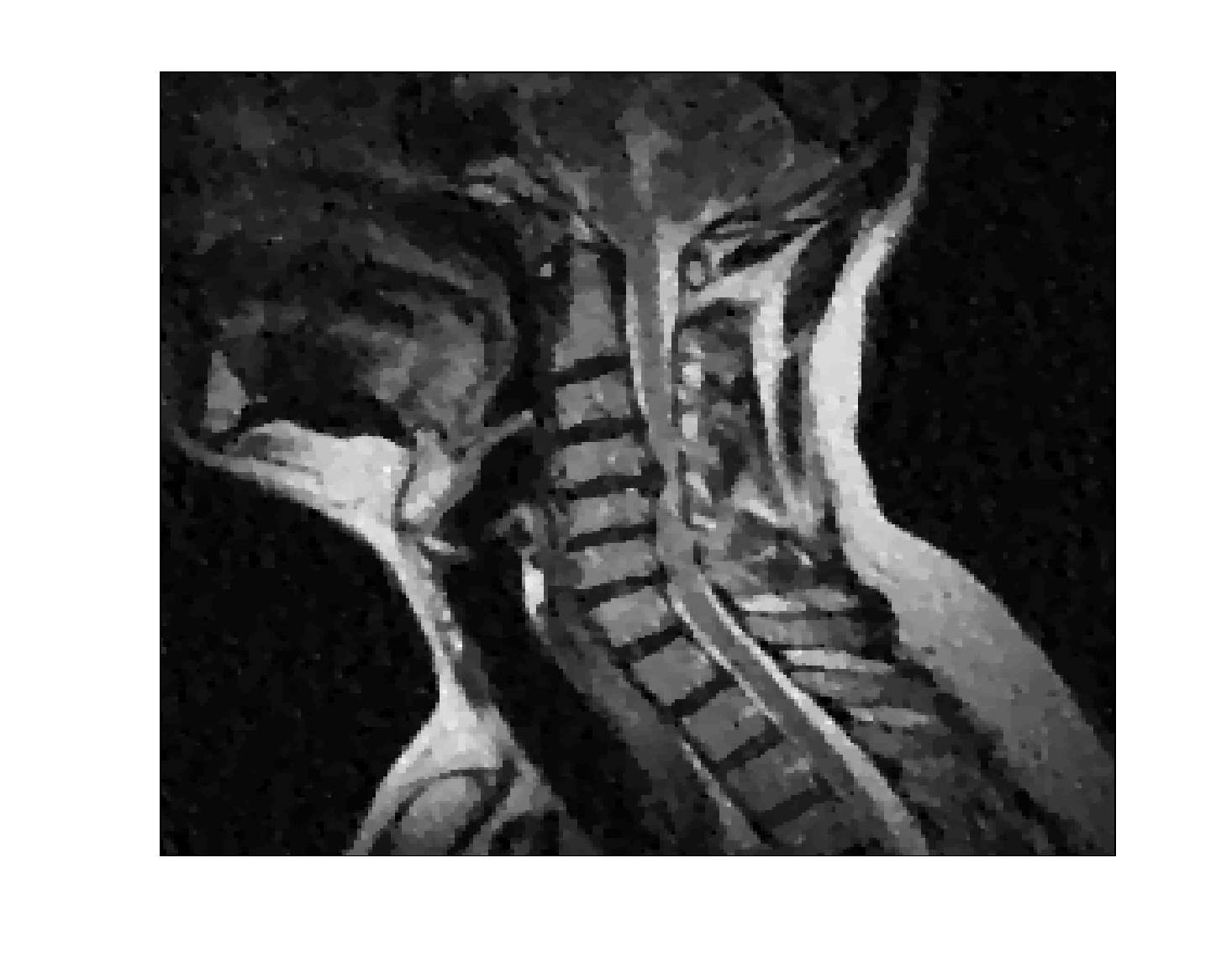} \label{2:d}
\includegraphics[height=1.8cm,width=1.8cm]{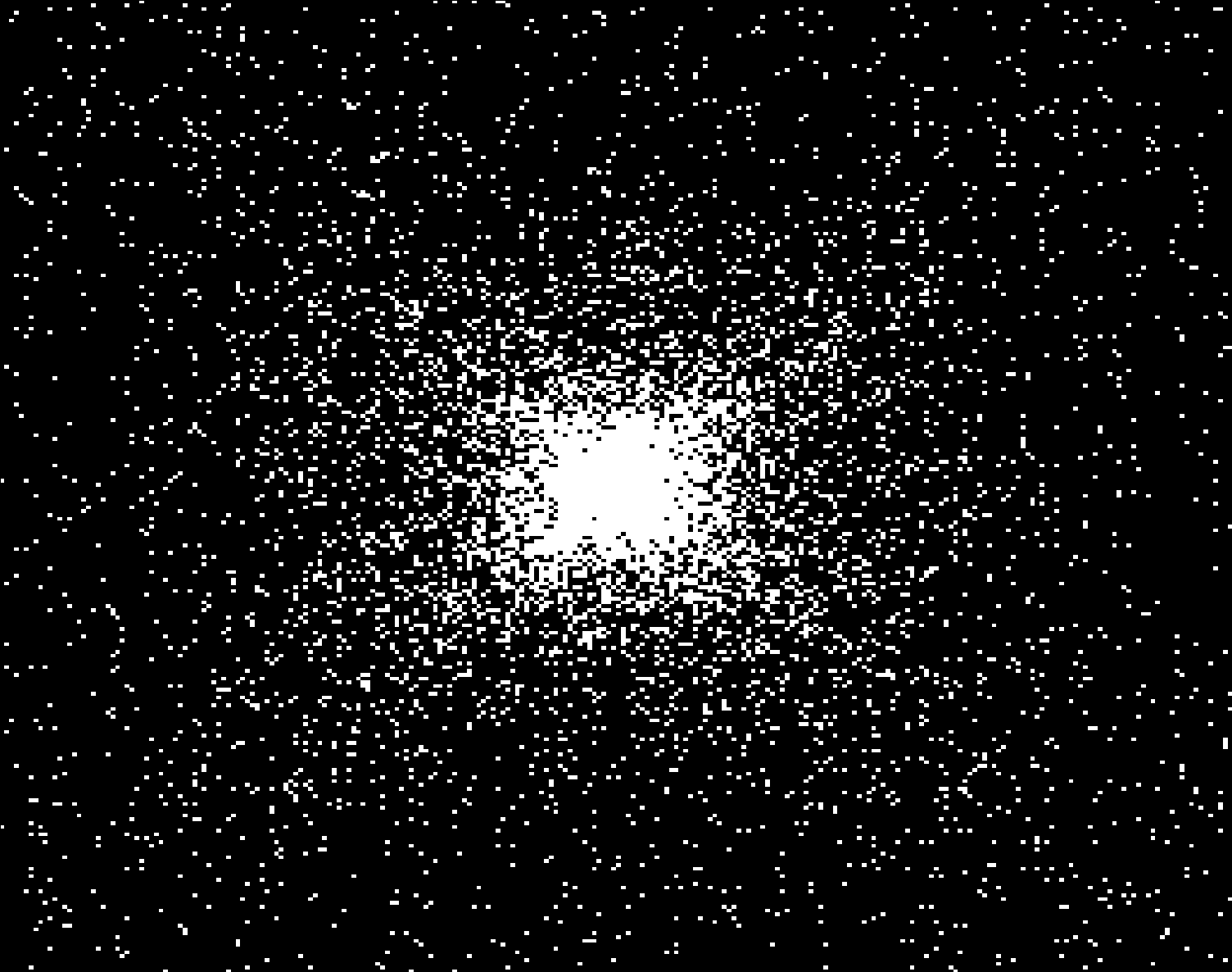} \label{1:d}
}
}

\mbox{
\subfigure[Sample $\propto (k_1^2 + k_2^2)^{-1}$]{
\includegraphics[height=1.8cm,width=1.8cm]{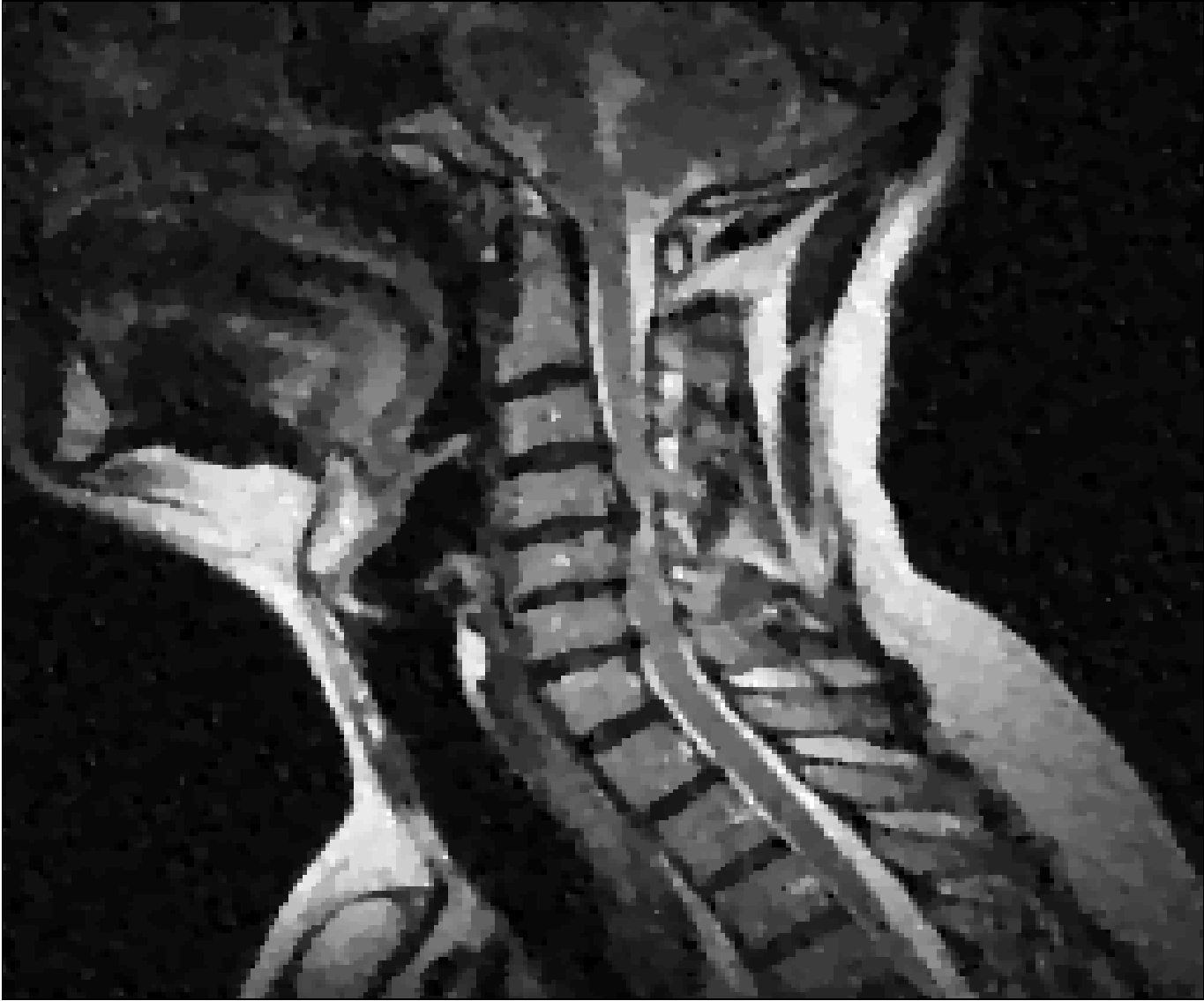} \label{2:e}
\includegraphics[height=1.8cm,width=1.8cm]{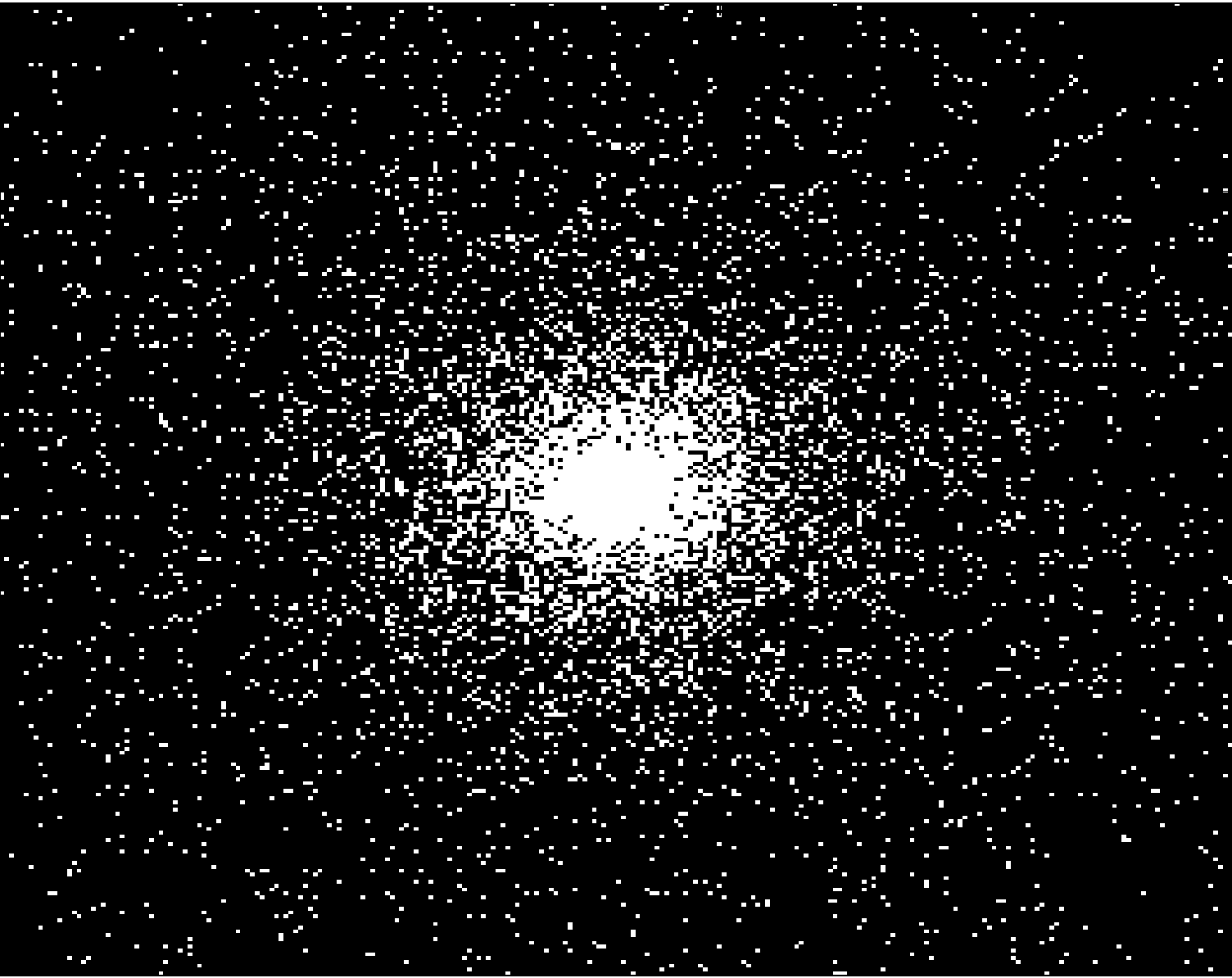} \label{1:e}
}

\subfigure[Sample $\propto (k_1^2 + k_2^2)^{-3/2}$]{
\includegraphics[height=1.8cm,width=1.8cm]{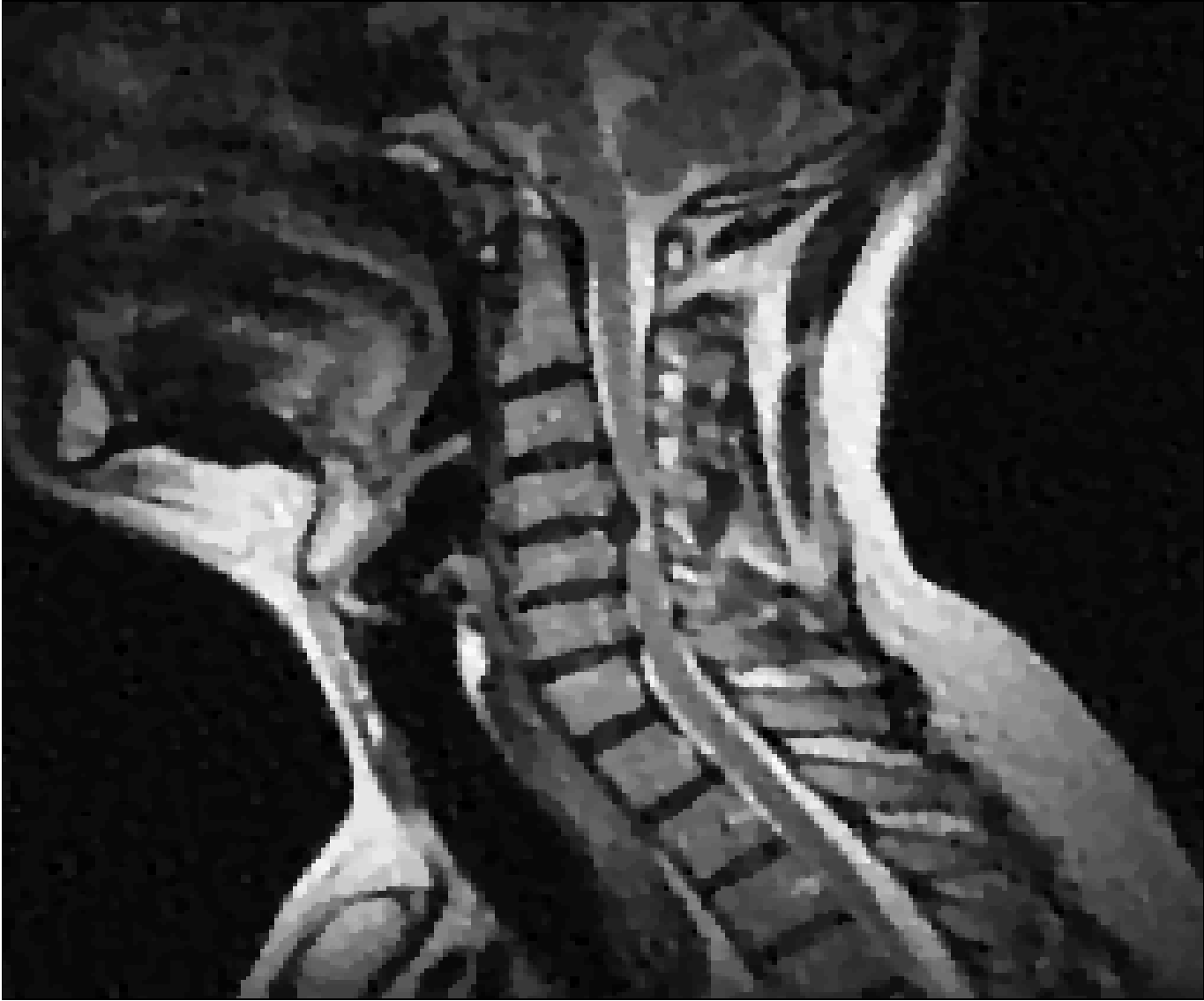} \label{2:f}
\includegraphics[height=1.8cm,width=1.8cm]{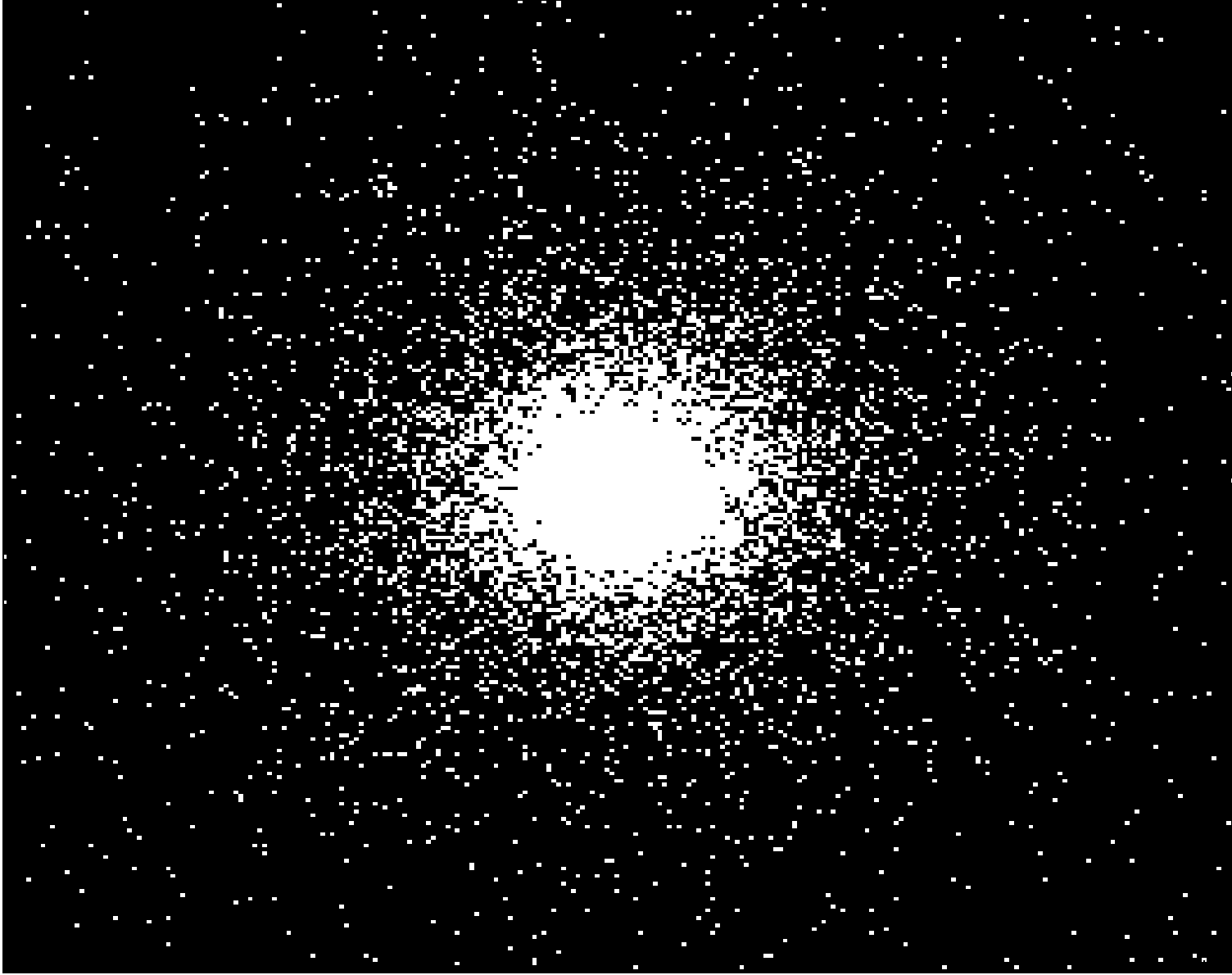} \label{1:f}
}
}
\end{center}
\caption{\label{fig:1} {\footnotesize Various reconstructions of a 256 by 256 MRI image with total variation minimization as in Theorem \ref{thm1} with $\varepsilon = .001$ and using $m=6400$ noiseless partial DFT measurements with frequencies $\Omega = (k_1, k_2)$ sampled from various distributions.  Beside each reconstruction is a plot of K-space $\{ (k_1, k_2): -N/2 +1 \leq k_1, k_2 \leq N/2 \}$  and the frequencies used (in white).
Theorem \ref{thm1} guarantees stable and robust recovery  for the inverse square-distance distribution in (g); a slightly stronger guarantee can be obtained for the inverse-max sampling distribution given in (f) from the stronger local coherence bound in 
Theorem~\ref{Bincoherent}.  The $\ell_2$ relative errors of reconstruction corresponding to each are (b) .29,  (c) .82, (d) .41, (e) .32, (f) .26, (g) .25, and (h)  .24.  }}
\medskip
\end{figure} 

For a more detailed comparison at higher resolution, we consider in Figures \ref{fig:2} and \ref{fig:3} the $1024^2$ pixel {\em wet paint} image \cite{riceweb}. In a first experiment, we use the relatively low number of $m=12,000$ samples -- slightly more than $1\%$ of the number of pixels -- and visually compare inverse quadratic, inverse cubic, and low-resolution sampling. Again, the relative $\ell_2$ reconstruction errors are comparable in all three cases.  Visually, however, the variable density reconstructions recover more fine details such as the print on the wet paint sign (see Figure \ref{fig:3}).

\begin{figure}[h!]
\begin{center}
\mbox{
\subfigure[Original image]{
 \includegraphics[height=1.8cm,width=1.8cm]{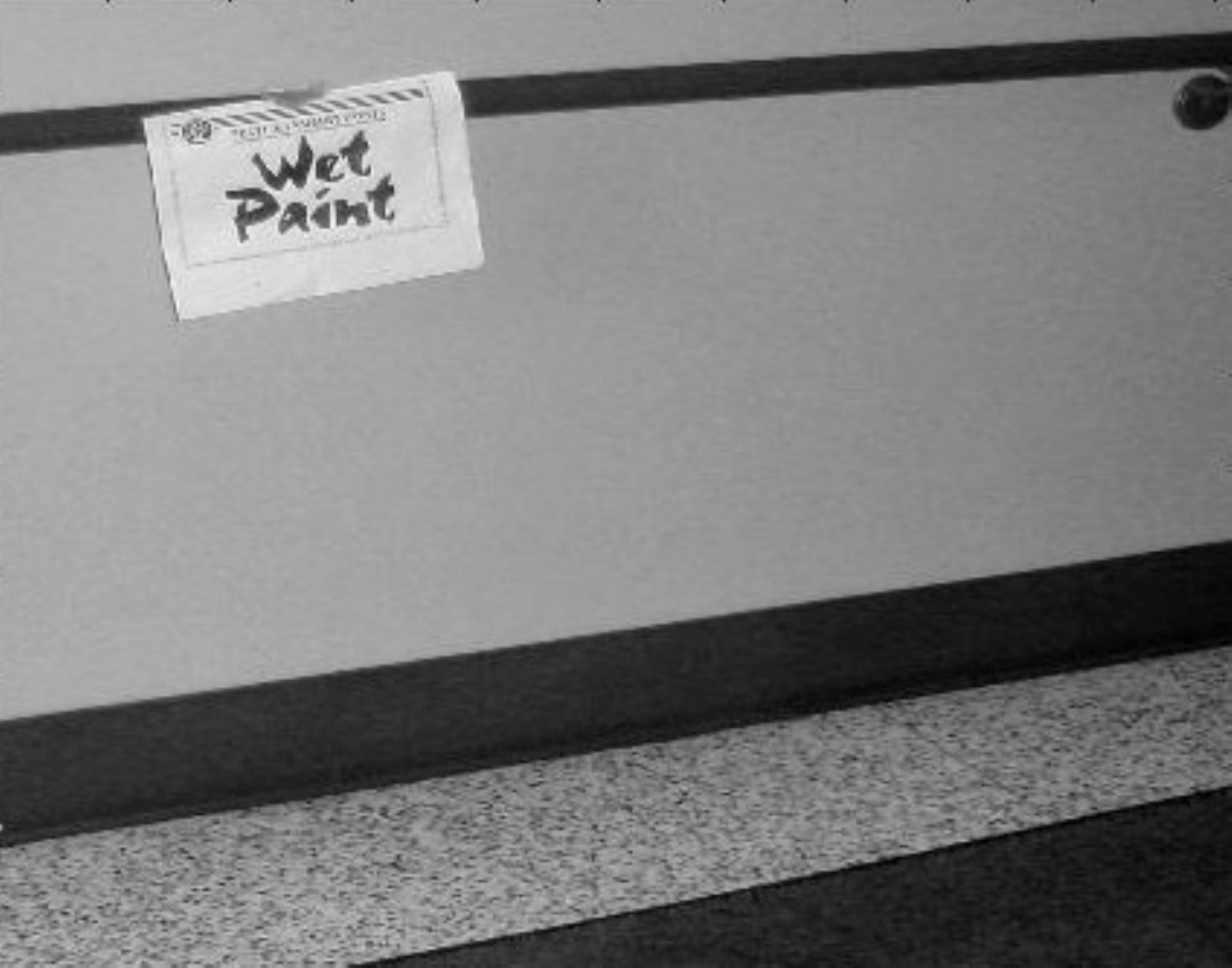}
 \includegraphics[height=1.8cm, width=1.8cm]{full_sample}
}
\subfigure[Lowest frequencies only]{
\includegraphics[height=1.8cm,width=1.8cm]{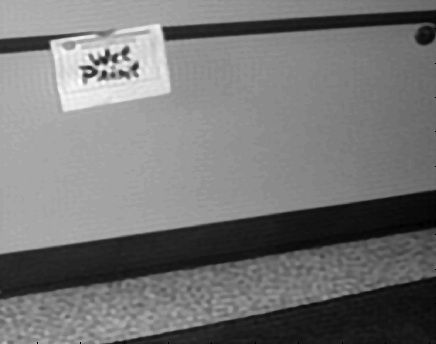} \label{1:zz}
\includegraphics[height=1.8cm,width=1.8cm]{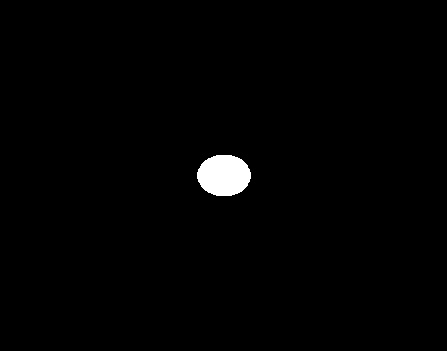} \label{2:zz}
}
}
\mbox{
\subfigure[Sample $\propto (k_1^2 + k_2^2)^{-1}$]{
\includegraphics[height=1.8cm,width=1.8cm]{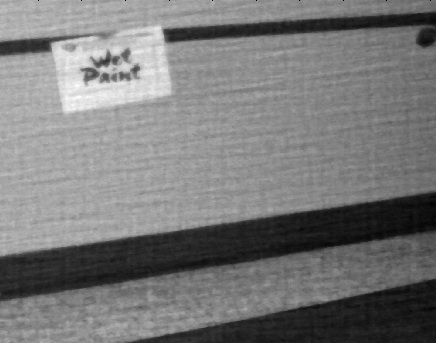} \label{2:aa}
\includegraphics[height=1.8cm,width=1.8cm]{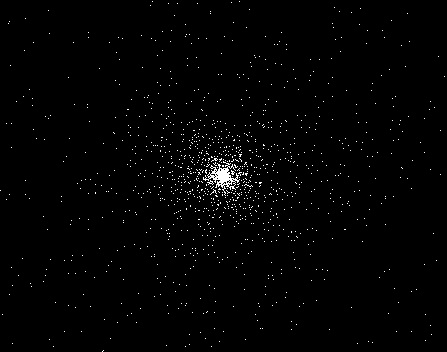} \label{1:aa}
}
\subfigure[Sample $\propto (k_1^2 + k_2^2)^{-3/2}$]{
\includegraphics[height=1.8cm,width=1.8cm]{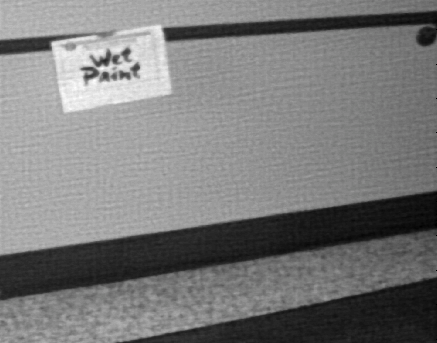} \label{2:bb}
\includegraphics[height=1.8cm,width=1.8cm]{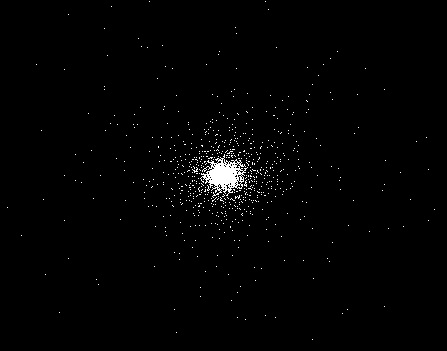} \label{1:bb}
}
}
\end{center}
\caption{\label{fig:2} {\footnotesize Various reconstructions of a $1024^2$ pixel \emph{wet paint} image with total variation minimization as in Theorem \ref{thm1} with $\varepsilon = .001$ and using $m=12,000$ noiseless partial DFT measurements with frequencies $\Omega = (k_1, k_2)$ sampled from various distributions.  Beside each reconstruction is a plot of K-space $\{ (k_1, k_2): -N/2 +1 \leq k_1, k_2 \leq N/2 \}$  and the frequencies used (in white). The relative reconstruction errors corresponding to each  reconstruction are (b) .18,  (c) .21, and (d) .19}}
\medskip
\end{figure} 

\begin{figure}[h!]
\begin{center}
\mbox{
\subfigure[Original image]{
 \includegraphics[height=1.8cm,width=1.8cm]{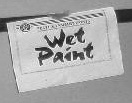}
 \includegraphics[height=1.8cm, width=1.8cm]{full_sample}
}
\subfigure[Lowest frequencies only]{
\includegraphics[height=1.8cm,width=1.8cm]{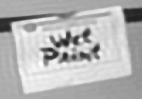} 
\includegraphics[height=1.8cm,width=1.8cm]{DetermKspace}
}
}
\mbox{
\subfigure[Sample $\propto (k_1^2 + k_2^2)^{-1}$]{
\includegraphics[height=1.8cm,width=1.8cm]{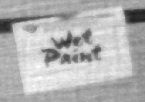}
\includegraphics[height=1.8cm,width=1.8cm]{Pow2Kspace}
}
\subfigure[Sample $\propto (k_1^2 + k_2^2)^{-3/2}$]{
\includegraphics[height=1.8cm,width=1.8cm]{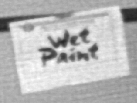}
\includegraphics[height=1.8cm,width=1.8cm]{Pow3Kspace}
}
}
\end{center}
\caption{\label{fig:3} {\footnotesize The reconstructions from Figure \ref{fig:2}, zoomed in.}}
\medskip
\end{figure} 

In a final experiment, we still use the 1024 by 1024 wet paint image, but we now add i.i.d. Gaussian noise to the measurements and compare the reconstruction error for different power law densities. At low SNR ($\ell_2$ norm of signal is 10 times as large as $\ell_2$ norm of noise), the reconstruction error is again comparable for all powers except for uniform sampling (power 0).  At a higher noise level (signal $\ell_2$ norm is only twice as large as noise $\ell_2$ norm), uniform sampling completely fails to recover the image, returning an almost constant image, while the power law densities and low frequency sampling return comparable relative $\ell_2$-norm errors around 0.6.  Despite the comparable $\ell_2$-norm errors,  we still observe visually that power-law sampling is able to recover fine details of the image better than low-frequency sampling, as exemplified by the comparison between the reconstructions for the inverse-quadratic density from Theorem \ref{thm1} and the low frequency-only scheme, also 
plotted in Figure \ref{fig:4}.

\begin{figure}[h!]
\begin{center}
\subfigure[Reconstruction errors by various power-law density sampling at low noise (filled line) and high noise (dashed line)]{
 \includegraphics[height=2.5cm,width=7cm]{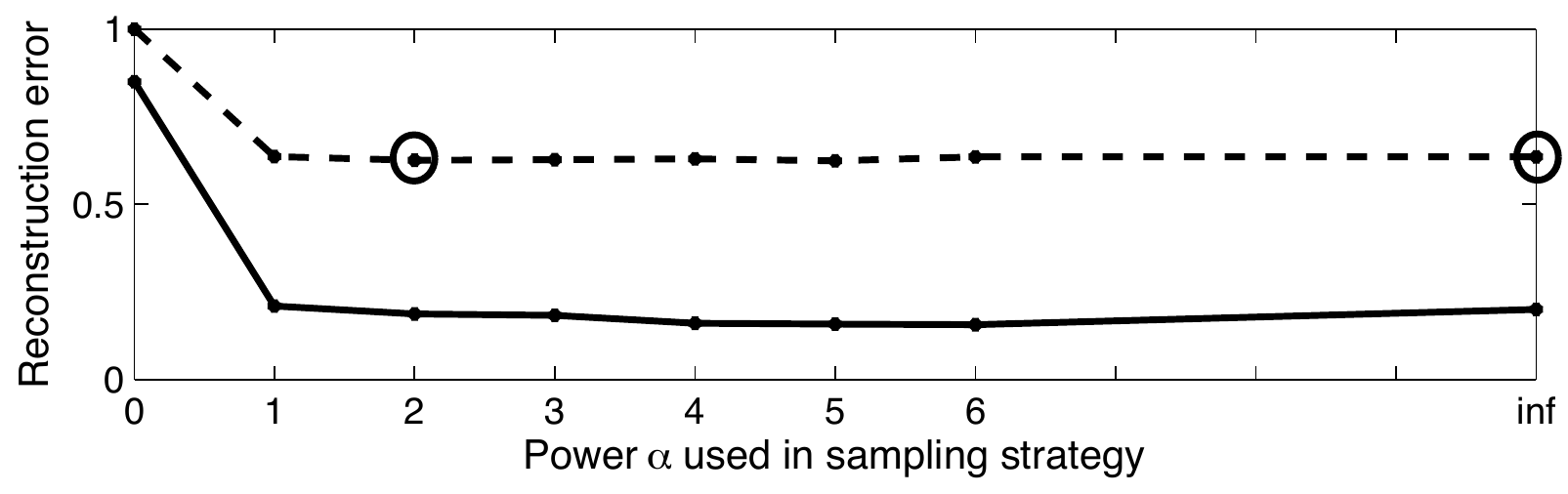}
 }
\mbox{
\subfigure[The \emph{wet paint} reconstructions indicated by the circled errors on the error plot, zoomed in on the paint sign. At high noise level, inverse quadratic-law sampling (left) still reconstructs fine details of the image better than low frequency-only sampling (right).]{
\includegraphics[height=3cm,width=3cm]{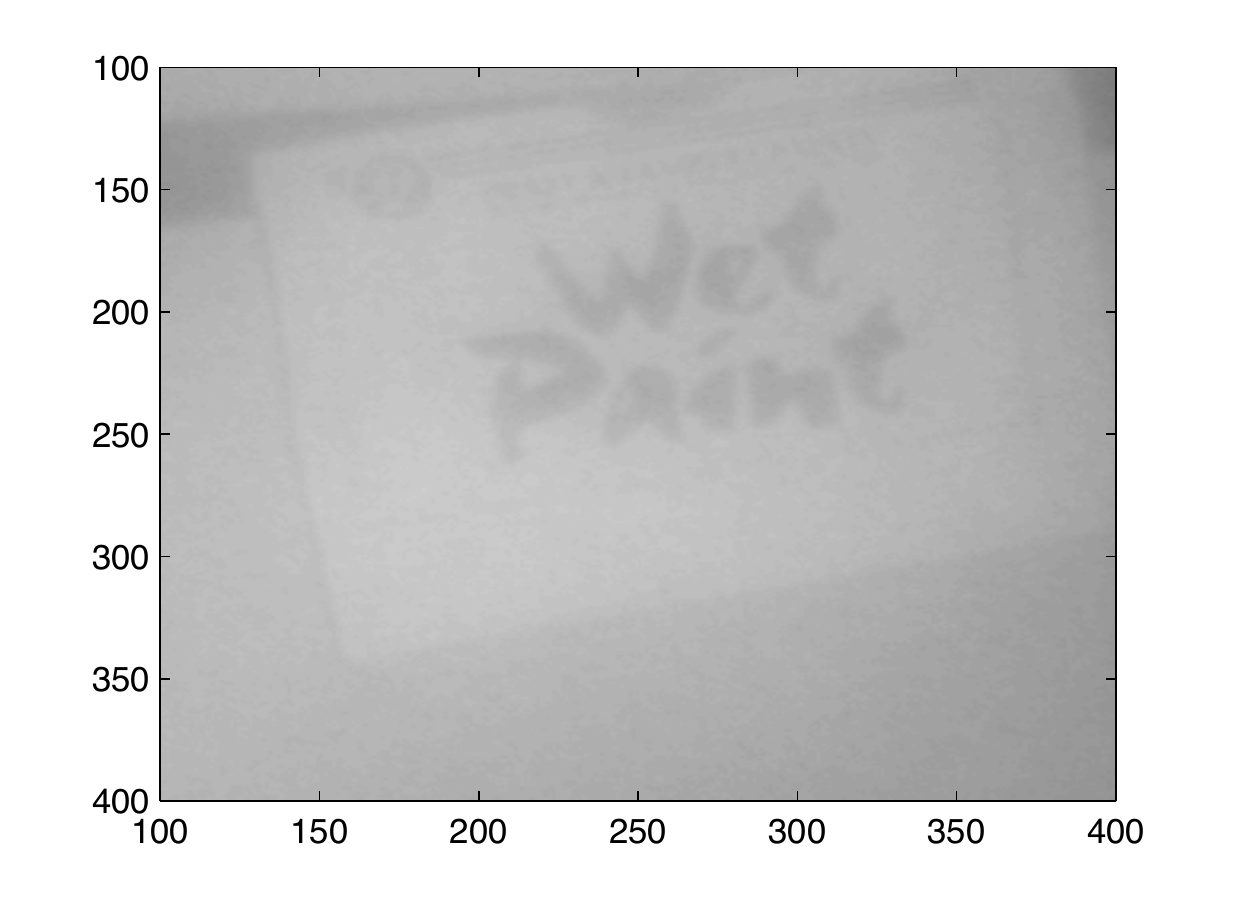} \quad \quad 
\includegraphics[height=3cm,width=3cm]{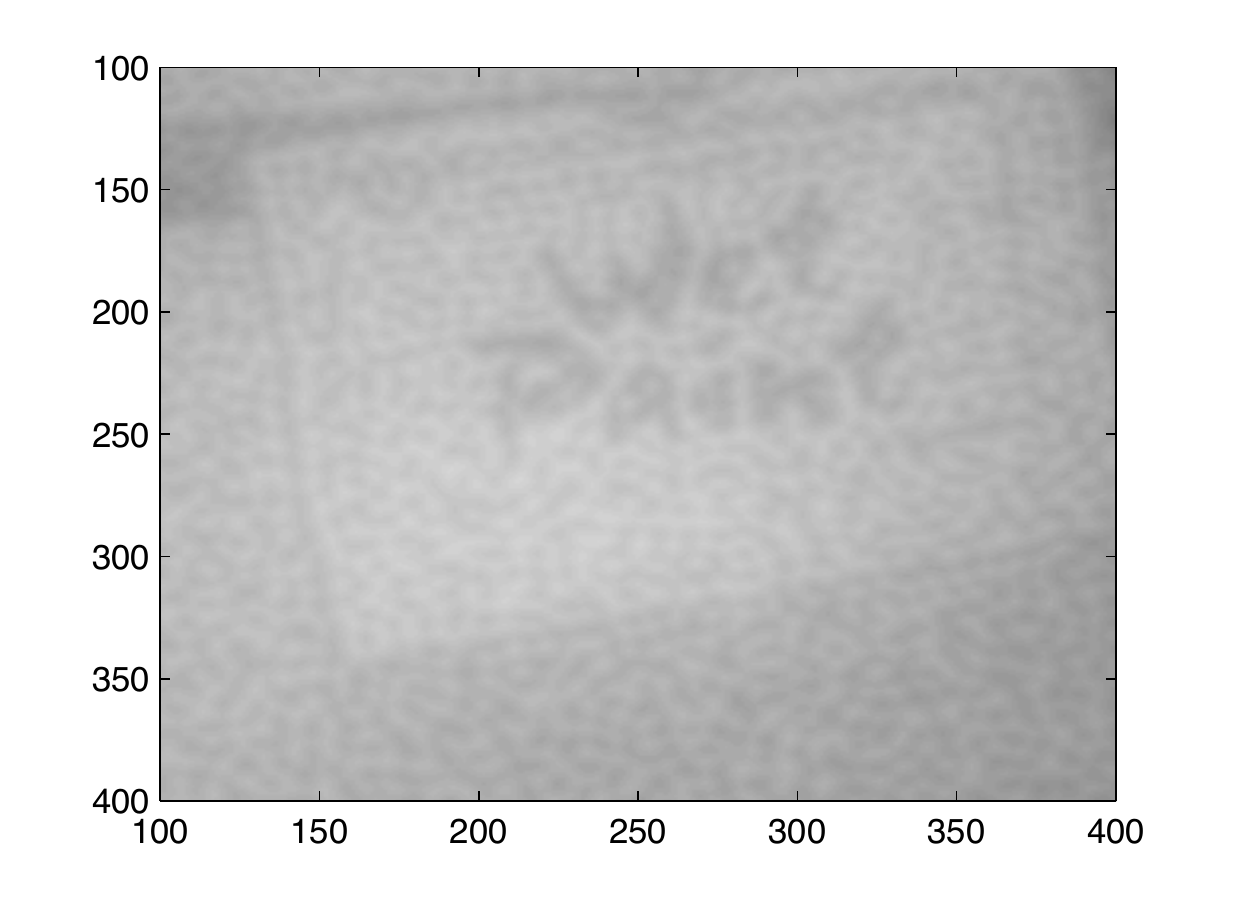}
}}

\end{center}
\caption{\label{fig:4} {The plot in \footnotesize (a) $\ell_2$ relative errors incurred by reconstructing the $1024^2$-pixel \emph{wet paint} image $x$  with total variation minimization as in Theorem \ref{thm1} with normalization $\| x \|_2 = 1$ and noise levels $\varepsilon = .1$ (filled line) and $\varepsilon = .5$ (dashed line) from $m=50,000$ partial DFT measurements with frequencies $\Omega = (k_1, k_2)$ sampled from power-law densities of the form $\text{Prob}\big( (k_1, k_2) \in \Omega \big) \propto (k_1^2 + k_2^2 + 1)^{-\alpha/2},$ with powers ranging from $\alpha = 0$ (uniform sampling) to $\alpha = 6$, as well as $\alpha = \infty$ (lowest frequencies only). The guarantees from Theorem \ref{thm1} hold only for power $\alpha = 2$.}  In (b), we display the \emph{wet paint} images reconstructed using quadratic power-law sampling and low-frequency only sampling  (power = infinity) which are indicated by circles on the corresponding error plot.}
\medskip
\end{figure} 

One should remark that all of these experiments were performed without preconditioning in the regularization term, while our results contain such a step. Preliminary experiments suggest that this may be an artifact of the proof, and for this reason, our experiments were carried out with the standard noise model in the reconstruction procedure.  A more in depth comparison of various noise models and weighting in the reconstruction poses an interesting object of study for future work. Note that weighted noise models similar to the one resulting from our analysis have been explored in \cite{laurent1, laurent2}.

{\section{Summary and outlook}\label{summary}}
We established reconstruction guarantees for variable-density discrete Fourier measurements in both the wavelet sparsity and gradient sparsity setup. Our results build on local coherence estimates between Fourier and wavelet bases.  Although we derive local coherence estimates only for 1D and 2D Fourier/wavelet systems, such estimates can be extended to higher dimensions by induction, using the tensor-product structure of these bases.

Variable density sampling in compressive imaging has often been justified as taking into account the tree-like sparsity structure of natural images in wavelet bases (e.g., in \cite{WA10}).  We note that our theory does not directly take such signal statistics into account, and depends only on the local incoherence between Fourier and wavelet bases.  Incorporating this additional structure to derive stronger reconstruction guarantees, by either improved sampling strategies or improved reconstruction strategies, remains an interesting and important direction of future research.

{All the recovery guarantees in this paper are uniform, that is, we seek measurement ensembles which allow for approximate reconstruction of all images. For non-uniform recovery guarantees, we expect that the number of measurements required in our main results can be reduced by several logarithmic factors by following a probabilistic and ``RIP-less"  approach \cite{candesplan11}. }

It should also be noted that this paper does not address the important issue of errors arising from discretization of the image and Fourier measurements. In particular, as observed for example in \cite{AH11}, the use of discrete rather than continuous Fourier representations can be a significant source of error in compressive sensing. The authors of \cite{AH11} propose to resolve this issue using uneven sections, that is, the number of discretization points in frequency is chosen to be larger than the number of discretization points in time.  Nevertheless, the results in \cite{AH11} are again just formulated for incoherent samples. 
Recently, it has been proposed to overcome this issue by sampling all of the low frequencies in addition to uniformly sampling the higher frequencies \cite{AHHT11}. After the submission of this paper, reconstruction guarantees for such a setup were provided in \cite{ahpr13}, also for a generalization to multilevel sampling schemes. In addition to an asymptotic notion of coherence (related to the local coherence we look at in this paper), \cite{AH11} also considers an asymptotic notion of sparsity, which relates to the additional structure of wavelet expansions mentioned above. 

We think that it should be an interesting to study how our approach can be applied to infinite dimensional image models -- due to the variable density, it may even be possible to sample from all of the infinite set rather than restricting to a finite 
subset. Such a generalization would prove challenging for the optimization-based approaches such as in \cite{pvw11}, which will always be specific to the given problem dimension.  In this sense, we expect that the additional understanding provided by this paper can eventually lead to optimized sampling schemes. All these questions, however, are left for future work.

\section*{Acknowledgments}
The authors would like to thank Ben Adcock, Anders Hansen, Deanna Needell, Holger Rauhut, Justin Romberg, Amit Singer, Mark Tygert, Robert Vanderbei, Yves Wiaux, and the anonymous reviewers for helpful comments and suggestions.
They are grateful for the stimulating research environment of the Mathematisches Forschungsinstitut Oberwolfach, where part of this work was completed. Rachel Ward was supported in part by an Alfred P Sloan Research Fellowship, a Donald D. Harrington Faculty Fellowship, an NSF CAREER grant, and DOD-Navy grant N00014-12-1-0743.

\bibliographystyle{plain}
\bibliography{FTV}

\end{document}